%% file: main_emnlp.tex
\documentclass[11pt]{article}

\usepackage[final]{acl}

\usepackage{times}
\usepackage{latexsym}

\usepackage[T1]{fontenc}

\usepackage[utf8]{inputenc}

\usepackage{microtype}

\usepackage{inconsolata}

\usepackage{graphicx}

\usepackage{makecell}
\usepackage{amsmath}
\usepackage{amssymb}
\usepackage{mathtools}
\usepackage{amsthm}
\usepackage{algorithm}
\usepackage{algorithmic}
\usepackage{wrapfig} 
\usepackage[most]{tcolorbox}
\usepackage{xcolor}
\usepackage{threeparttable}
\usepackage{multirow} 
\usepackage{booktabs}
\usepackage{subcaption}
\usepackage{hyperref}
\usepackage{pifont}
\usepackage{svg}
\newcommand{\cmark}{\ding{51}}%
\newcommand{\xmark}{\ding{55}}%
\usepackage[table]{xcolor}
\usepackage{svg}
\usepackage{float}
\usepackage{placeins}
\newcommand{\MEE}{\mathrm{MEE}}
\newcommand{\MES}{\mathrm{MES}}
\theoremstyle{plain}
\newtheorem{theorem}{Theorem}[section]

\newtheorem{lemma}[theorem]{Lemma}

\theoremstyle{definition}

\theoremstyle{remark}

\title{Beyond Truncation: Rethinking LLM Decoding as Ensemble Pruning}

\author{
  \textbf{Dunyao Xue\textsuperscript{1}},
  \textbf{Chengshuo Du\textsuperscript{1}},
  \textbf{Zhengbo Wang\textsuperscript{1}},
  \textbf{Wenlin Dai\textsuperscript{1,2,\textdagger}},
  \textbf{Cheng Meng\textsuperscript{1,3,\textdagger}}
\\
\\
  \textsuperscript{1}Institute of Statistics and Big Data, Renmin University of China, Beijing, China
\\
  \textsuperscript{2}Big Data and Responsible Artificial Intelligence for National Governance, Renmin University of China, Beijing, China
\\
  \textsuperscript{3}Center for Applied Statistics, Institute of Statistics and Big Data, Renmin University of China, Beijing, China
\\
  \texttt{\{xuedunyao1202,duchengshuo,zhengbowang,wenlin.dai,chengmeng\}@ruc.edu.cn}
}

\begin{document}
\maketitle
\begingroup
\renewcommand{\thefootnote}{\textdagger}
\begin{NoHyper}
\footnotetext{Corresponding authors.}
\end{NoHyper}
\endgroup
\begin{abstract}
We introduce Mahalanobis-Ensemble Decoding (ME-Decoding), a novel Large Language Model (LLM) decoding framework that frames candidate token selection as ensemble pruning. Existing selection strategies rely predominantly on scalar probabilities, ignoring geometric semantic relationships and causing candidate redundancy. Meanwhile, current geometry-aware methods often require complex optimization or directly reweighting the original token probabilities, leading to significant computational overhead or inference instability. To address this, we formulate decoding as a subset optimization problem using a Mahalanobis distance-driven objective to enhance semantic diversity while preserving high probabilities. Specifically, we dynamically discount redundant generation paths using a token similarity matrix, constructed via an adaptive-bandwidth kernel over token embeddings. We further devise an efficient greedy selection algorithm with near-linear complexity in the candidate size under early stopping, while establishing its theoretical approximation guarantees. This renders ME-Decoding a robust, plug-and-play module with negligible inference overhead. Extensive experiments across diverse reasoning and generation tasks demonstrate that our method consistently achieves strong performance.
\end{abstract}

\section{Introduction}

Large Language Models (LLMs) have demonstrated strong capabilities in mathematical reasoning, instruction following, and open-ended generation \citep{brown2020language, touvron2023llama, chowdhery2023palm, achiam2023gpt}. 
Besides model scaling and post-training alignment, the inference-time decoding strategy also plays a crucial role in determining generation quality. 
At each step, an autoregressive LLM produces a next-token distribution, from which the decoder selects or samples the next token. 
Deterministic methods such as greedy decoding and beam search \citep{holtzman2019curious} favor high-probability continuations but often produce repetitive or overly conservative outputs. 
Sampling-based methods introduce stochasticity and improve generation diversity, but they may also assign non-negligible probability to low-quality tokens, increasing inference uncertainty and potentially leading to illogical continuations or hallucinations.

To control this quality--diversity trade-off, many decoding methods perform probability-based truncation and reshaping of the next-token distribution. 
Classical approaches such as Top-$k$ and nucleus sampling restrict sampling to high-probability tokens \citep{fan2018hierarchical, holtzman2019curious}, while later methods such as Min-$p$, entropy-aware sampling, and $p$-less adapt the truncation rule according to model confidence or distributional statistics \citep{hewitt2022truncation, nguyen2025turning, tan2025p}. 
Despite their effectiveness, these methods mainly operate on token probabilities and treat candidates as independent categorical outcomes. 
As a result, they overlook the semantic geometry among tokens, which may retain redundant candidates and limit the effectiveness of the selected sampling support \citep{yang2026decoding}.

\begin{table}[htbp]
\centering
\footnotesize
\setlength{\tabcolsep}{2.pt}
\renewcommand{\arraystretch}{0.95}
\caption{Comparison of different decoding methods.}
\label{tab:decoding_comparison}
\begin{threeparttable}
\begin{tabular}{@{}lcccc@{}}
\toprule
\multirow{2}{*}{\textbf{Method}} 
& \textbf{Decoding} 
& \textbf{Geometry} 
& \textbf{Adaptive} 
& \textbf{Redundancy} \\
& \textbf{Strategy} 
& \textbf{Aware\tnote{1}} 
& \textbf{Selection} 
& \textbf{Control\tnote{2}} \\
\midrule
Greedy  & Determin.    & \xmark & \xmark & \xmark \\
Top-$k$   & Prob. Trunc. & \xmark & \xmark & \xmark \\
Min-$p$   & Prob. Trunc. & \xmark & \xmark & \xmark \\
$p$-less  & Prob. Trunc. & \xmark & \cmark & \xmark \\
Top-$W$   & Dist. Match. & \cmark & \cmark & \xmark \\
CraEG   & Reweighting  & \cmark & \xmark & \cmark \\
\midrule
\textbf{Ours} 
& \textbf{Ensemble}
& \cmark
& \cmark
& \cmark \\
\bottomrule
\end{tabular}
\begin{tablenotes}
\footnotesize
\item[1] \textbf{Embedding Geometry}: Uses token embedding geometry in the decoding process.
\item[2] \textbf{Redundancy Control}: Explicitly controls redundancy among candidate tokens.
\end{tablenotes}
\end{threeparttable}
\end{table}
Although recent geometry-aware methods incorporate token-space structure, they continue to face significant challenges. For example, Top-$W$ formulates a Wasserstein-regularized distribution-matching problem \citep{davoodi2026geometry}, which requires approximating the Wasserstein objective and careful hyperparameter tuning, while not explicitly optimizing redundancy within the selected token support. Alternatively, CraEG penalizes tokens in crowded regions via a lightweight reweighting mechanism \citep{yang2026decoding}. However, the approach still relies on an auxiliary decoding method for post-processing, making its behavior dependent on the properties of the downstream method. These limitations motivate our key question:

\begin{center}
\begin{tcolorbox}[colframe=black!50!white, 
    colback=gray!5!white, 
    boxrule=0.5mm, 
    arc=5pt,width=7.6cm] 
\centering
\textit{How can we design  a simple and efficient framework that selects a compact yet informative token set while preserving high-confidence candidates?}
\end{tcolorbox}
\end{center}

\begin{figure*}[t]
    \centering
    \includegraphics[width=\textwidth]{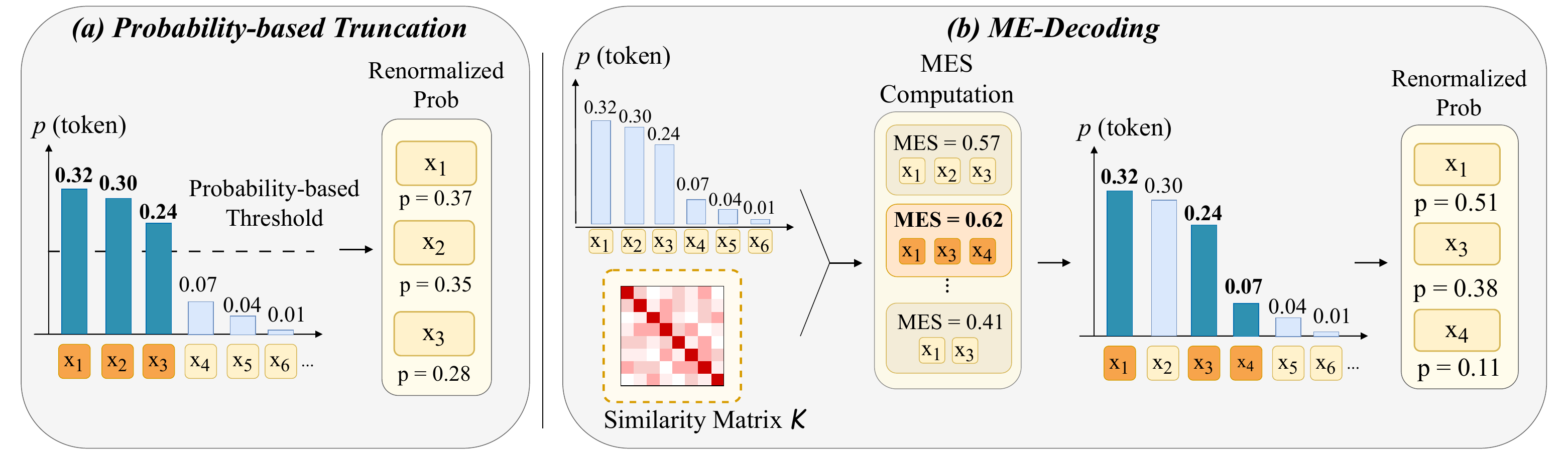}
    \caption{
Comparison between probability-based truncation methods (left) and our ME-Decoding framework (right). 
ME-Decoding extends standard probability-only token selection by incorporating embedding-based similarity into the MES score and selecting a compact geometry-aware subset for sampling.}
\label{fig:flowchart}
\end{figure*}

In this work, we address this question from the perspective of ensemble pruning, a paradigm that entails selecting a compact subset of learners from a larger pool to optimize predictive outcomes. This paradigm closely mirrors the token selection problem in LLM decoding. Extensive research in ensemble learning shows that effective ensembles should balance individual accuracy with collective complementarity \citep{krogh1994neural, opitz1999popular}.
Consequently, simply selecting the highest-scoring learners may introduce redundancy and hurt overall performance \citep{zhou2002ensembling, li2012diversity}. Similarly, LLM decoding encounters an analogous challenge of token redundancy. Building on this insight, we introduce ME-Decoding, a novel framework that incorporates token geometry to reformulate decoding as a subset maximization problem driven by the Mahalanobis distance. By approximately solving this objective with an efficient greedy procedure, ME-Decoding mitigates redundancy within the selected token set during generation with negligible inference overhead, thereby improving token selection quality while maintaining reasoning performance.


Our contributions are summarized as follows:
\begin{itemize}
    \item We reformulate LLM decoding through the novel lens of ensemble pruning.
    
    \item We define a Mahalanobis distance-driven objective to adaptively select a compact, informative candidate set while preserving token probabilities.

    \item We devise an efficient greedy selection algorithm with candidate-linear complexity under early stopping, while establishing its trajectory unimodality and theoretical approximation guarantees.
    
    \item Extensive experiments across diverse reasoning and generation tasks show that our method consistently outperforms strong existing baselines.
\end{itemize}

\section{Motivation}

\subsection{Background of LLM Decoding}
Large Language Models (LLMs) generate text autoregressively by predicting the next token from the preceding context. At each step, an LLM outputs a probability distribution $\mathcal{P}\in\Delta^{|\Omega|-1}$ over the full vocabulary $\Omega$. To avoid sampling from the unreliable long tail, modern decoding methods usually first form a candidate pool $\mathcal V_t\subseteq\Omega$, then select a final subset $S\subseteq\mathcal V_t$ and sample from the renormalized distribution:
\begin{equation*}
    \boldsymbol{p}_S(i) =
    \frac{p_{i}\mathbf{1}\{i\in S\}}
    {\sum_{j\in S}p_{j}}.
\end{equation*}

This process can be viewed as a subset selection problem aiming to preserve the original distribution under a probability-based criterion:
\begin{equation*}
\begin{split}
    S^\star &= \arg\min_{S\subseteq\mathcal V_t} f(\boldsymbol{p}_S), \\
    \text{s.t.} &\quad S=\{i:p_{i}\geq \tau(\mathcal{P})\},
\end{split}
\end{equation*}
where $f(\cdot)$ denotes a generalized decoding objective and $\tau(\mathcal{P})$ is a probability-based truncation threshold.

Most prevailing decoding methods follow this probability-driven paradigm. Top-$k$ and nucleus sampling truncate the distribution using fixed probability or cumulative-mass thresholds, while adaptive methods such as $p$-less and entropy-aware sampling adjust the threshold based on distributional statistics. 
Despite different truncation rules, these methods remain probability-centric and treat candidate tokens as independent outputs. 
They therefore overlook semantic dependencies among tokens, motivating us to incorporate inter-token semantics into the subset selection objective.

\subsection{Connection to Ensemble Pruning}

Ensemble pruning aims to select a compact sub-ensemble that preserves or improves the predictive performance of the full ensemble. Classical error analysis \citep{breiman2001random} characterizes ensemble performance as a trade-off between individual accuracy and inter-model diversity. Following this idea, \citet{zhang2006ensemble} formulate pruning as a quadratic subset-selection problem. Given a binary error indicator matrix $\boldsymbol{E}$, they construct the error co-occurrence matrix $\boldsymbol{C}=\boldsymbol{E}^{\top}\boldsymbol{E}$, where diagonal entries measure individual errors and off-diagonal entries measure shared failures. After normalization, the pruning objective can be written as
\begin{equation}
\min_{\boldsymbol{s}} \ \boldsymbol{s}^{\top}\widehat{\boldsymbol{C}}\boldsymbol{s}
\quad \text{s.t.}
\sum_i s_i = N,\ s_i\in\{0,1\}.
\label{eq:ensemble_pruning}
\end{equation}
This objective selects a size-$N$ sub-ensemble by jointly penalizing individual inaccuracy and pairwise redundancy.

This paradigm naturally parallels token selection in LLM decoding. Candidate tokens can be viewed as ensemble members: their probabilities measure individual confidence, while their embedding-space similarities measure semantic redundancy. Therefore, an ideal decoding subset should retain high-likelihood tokens while suppressing redundant candidates, mirroring the accuracy--diversity trade-off in ensemble pruning.

\subsection{Generalizing Ensemble Pruning to LLM Decoding via
Mahalanobis Distance
}
The quadratic objective in Eq.~\ref{eq:ensemble_pruning} leverages second-order statistics to penalize redundant candidates. Motivated by this, we aim to extend this diversity-aware framework to LLM decoding. To achieve this, we observe that for any representation vector $\boldsymbol{p}$ and a symmetric positive definite matrix, this quadratic structure naturally corresponds to the Mahalanobis distance.

In machine learning, the Mahalanobis distance is widely used to construct discriminative representations, such as in metric learning, out-of-distribution detection, and representation regularization \citep{weinberger2009distance,lee2018simple,wan2018rethinking,2026breaking}.
Formally, given two vectors $\boldsymbol{x},\boldsymbol{y}\in\mathbb{R}^d$ and a symmetric positive definite matrix $\boldsymbol{M}\succ0$, it is defined as
\begin{equation*}
    \mathcal{D}_M(\boldsymbol{x},\boldsymbol{y};\boldsymbol{M})
    :=
    \sqrt{
    (\boldsymbol{x}-\boldsymbol{y})^\top
    \boldsymbol{M}^{-1}
    (\boldsymbol{x}-\boldsymbol{y})
    }.
    \label{eq:mahalanobis_general}
\end{equation*}
In particular, setting $\boldsymbol{y}=\mathbf{0}$ and letting $\boldsymbol{M}$ be the token similarity matrix $\boldsymbol{K}$ gives the Mahalanobis norm
\begin{equation*}
    \|\boldsymbol{x}\|_{\boldsymbol{K}}
    :=
    \sqrt{\boldsymbol{x}^{\top}\boldsymbol{K}^{-1}\boldsymbol{x}}.
    \label{eq:mahalanobis_norm}
\end{equation*}

Mathematically, by explicitly incorporating the inverse matrix $\boldsymbol{K}^{-1}$, the Mahalanobis norm inherently downweights correlated directions and separates information along non-redundant axes. This geometric property aligns perfectly with our overarching goal: to suppress repetitive generation paths and enhance token diversity during decoding.

\section{Method: ME-Decoding}
In this section, we propose Mahalanobis-Ensemble Decoding (ME-Decoding), a geometry-aware decoding framework inspired by ensemble pruning. 
ME-Decoding selects a compact token subset by optimizing a Mahalanobis-style objective that combines token probabilities with embedding-based similarities.
The overall structure of ME-Decoding is illustrated on the right side of Figure~\ref{fig:flowchart}.
\subsection{Mahalanobis-Ensemble Score.}
Inspired by ensemble pruning, we view the candidate tokens at each decoding step as a pool of weak semantic predictors. Instead of retaining a fixed number of high-probability tokens, our goal is to construct a compact token ensemble that preserves high-confidence candidates while reducing redundancy in the token embedding space. At each decoding step, ME-Decoding is applied to a probability-based candidate pool $\mathcal V_t\subseteq\Omega$; for simplicity, we omit $t$ and write $\mathcal V$ hereafter.

We first define the \emph{Mahalanobis-Ensemble Energy} ($\MEE$) for a selected token subset \(\mathcal{S}\subseteq\mathcal{V}\) as
\begin{equation*}
    \MEE(\mathcal{S})
    =
    \boldsymbol{p}_S^\top\boldsymbol{K}_\mathcal{S}^{-1}\boldsymbol{p}_\mathcal{S},
    \label{eq:mee}
\end{equation*}
where \(\boldsymbol{p}_\mathcal{S}\) denotes the vector of token probabilities indexed by \(\mathcal{S}\), and \(\boldsymbol{K}_\mathcal{S}\) is the token similarity matrix restricted to \(\mathcal{S}\). The inverse matrix \(\boldsymbol{K}_\mathcal{S}^{-1}\) discounts redundant tokens and rewards subsets with high collective quality.

However, unlike standard ensemble pruning, the optimal subset size in decoding is unknown a priori. This requires us to maximize the $\MEE$ of the selected tokens while avoiding unnecessarily large subsets, thereby filtering out redundant tokens that contribute only marginally to the overall $\MEE$. To this end, we define the \emph{Mahalanobis-Ensemble Score} ($\MES$) as follows:
\begin{equation*}
    \MES_{\lambda}(\mathcal{S})
    =
    \frac{\MEE(\mathcal{S})}{\left[1+\lambda H_2^{T}( \boldsymbol{p})\right]^{|\mathcal{S}|}},
    \label{eq:mes_objective}
\end{equation*}
where $H_2^T(p)=1-\sum_{i\in\mathcal V}p_i^2$ is the order-2 Tsallis entropy measuring the uncertainty of the next-token distribution, and $\lambda>0$ controls the entropy-dependent size penalty.

Intuitively, the numerator rewards tokens with large non-redundant contributions, while the denominator discourages excessive subset expansion. When the distribution is flat, the unregularized $\MEE$ may keep increasing by accumulating weak marginal gains from many uncertain candidates. The entropy-dependent penalty raises the inclusion threshold, retaining only tokens with sufficiently large non-redundant contributions. 
Consequently, the decoding objective is given by:
\begin{equation}
    \mathcal{S}^\star
    =
    \arg\max_{\mathcal{S}\subseteq\mathcal{V}}
    \MES_{\lambda}(\mathcal{S}).
    \label{eq:mes_maximization}
\end{equation}
By optimizing the $\MES$, we can effectively extract a highly representative token subset from the candidate distribution.

\subsection{Construction of Similarity  Matrix}
To measure token similarity in the embedding space while maintaining numerical stability, we construct $\boldsymbol{K}$ using a Gaussian kernel on normalized token embeddings. Let \(\boldsymbol{e}_{i}\in\mathbb R^d\) denote the normalized embedding of token \(i\). We define
\begin{equation}
    C_{ij}=1-\boldsymbol{e}_{i}^\top \boldsymbol{e}_{j},
\qquad
K_{ij}=\exp\{-C_{ij}/\epsilon\}.
\label{eq:sim kernel}
\end{equation}

Here, \(C_{ij}\) measures the semantic distance between tokens, and \(\epsilon>0\) is a bandwidth parameter controlling the smoothness of the kernel.

In our implementation, the bandwidth $\epsilon$ is chosen adaptively according to the probability-weighted semantic dispersion of the candidate tokens:
$$
\epsilon
=
\frac{1}{2}\sum_{i,j\in\mathcal{V}}
p_i p_j C_{ij},
$$
where $\sum_{i,j\in\mathcal{V}}
p_i p_j C_{ij}$ is the expected pairwise semantic distance between tokens sampled from the candidate distribution. When the candidate tokens are semantically concentrated, the adaptive bandwidth becomes smaller, yielding a localized kernel and making the selection more probability-driven. Conversely, when the candidates are semantically dispersed, the bandwidth becomes larger, preserving semantic relations over a broader range and making the selection more geometry-aware. Thus, the adaptive bandwidth balances probability and semantic structure according to the local geometry of the decoding distribution.

\subsection{Optimization Strategy}
\begin{algorithm}[t!]
\caption{Greedy Algorithm for ME-Decoding}
\label{alg:greedy_mes_decoding}
\begin{algorithmic}[1]
\STATE {\bfseries Input:} Candidate token pool $\mathcal V$ with $N=|\mathcal V|$; candidate token scores $\boldsymbol{p}\in\mathbb{R}^{N}$; normalized candidate token embeddings $\boldsymbol{E}=[\boldsymbol{e}_1,\ldots,\boldsymbol{e}_N]^\top\in\mathbb{R}^{N\times d}$; $\MES$ penalty $\lambda$.
\STATE {\bfseries Initialize} $\mathcal{S}=\varnothing$.

\STATE Set $c_\lambda\gets 1+\lambda H_2^T(\boldsymbol{p})$.

\STATE Select the first token $j_1=\arg\max_i p_{i}$ and update $\mathcal{S}\gets\{j_1\}$.

\STATE Compute $\boldsymbol{K}_{\mathcal{S},\mathcal{S}}$ using Eq.~\ref{eq:sim kernel}.
\STATE $\boldsymbol{R}\gets (\boldsymbol{K}_{\mathcal{S},\mathcal{S}}^{1/2})^{-1}$, \quad $\boldsymbol{z}\gets \boldsymbol{R}\boldsymbol{p}_{\mathcal{S}}$.
\STATE $\MES^{g}\gets \|\boldsymbol{z}\|_2^2/c_\lambda^{|\mathcal{S}|}$.

\FOR{$t=1$ {\bfseries to} $N-1$}
\FOR{$j \in \{1,\ldots,N\}\setminus\mathcal{S}$}
\STATE Compute $\boldsymbol{\beta}_j\gets [K_{ij}]_{i\in\mathcal{S}}$ using Eq.~\ref{eq:sim kernel}, and set $b_j\gets \boldsymbol{K}_{jj}$.
\STATE $\boldsymbol{\alpha}_j\gets \boldsymbol{R}\boldsymbol{\beta}_j$, \quad $r_j\gets (b_j-\|\boldsymbol{\alpha}_j\|_2^2)^{-1/2}$.
\STATE Compute the $\MEE$ after adding token $j$:
\STATE $c_j\gets \|\boldsymbol{z}\|_2^2+\left[r_j(p_{j}-\boldsymbol{\alpha}_j^\top \boldsymbol{z})\right]^2$.
\ENDFOR

\STATE Select $j_\star=\arg\max_{j\notin\mathcal{S}} c_j$, let $c^\star=c_{j_\star}$.

\STATE Compute the candidate $\MES$ score:
\STATE $\MES^{g}_{\mathrm{cand}}\gets c^\star/c_\lambda^{|\mathcal{S}|+1}$.

\IF{$\MES^{g}_{\mathrm{cand}}\leq \MES^{g}$}
    \STATE {\bfseries break}
\ENDIF

\STATE $\boldsymbol{\gamma}\gets -r_{j_\star}\boldsymbol{R}^{\top}\boldsymbol{\alpha}_{j_\star}$.
\STATE $v_{j_\star}\gets r_{j_\star}\left(p_{j_\star}-\boldsymbol{\alpha}_{j_\star}^{\top}\boldsymbol{z}\right)$.

\STATE Update $\boldsymbol{R}$ and $\boldsymbol{z}$:
\[
    \boldsymbol{R}\gets
    \begin{bmatrix}
        \boldsymbol{R} & \boldsymbol{0}\\
        \boldsymbol{\gamma}^{\top} & r_{j_\star}
    \end{bmatrix},
    \quad
    \boldsymbol{z}\gets
    \begin{bmatrix}
        \boldsymbol{z}\\
        v_{j_\star}
    \end{bmatrix}.
\]

\STATE $\mathcal{S}\gets \mathcal{S}\cup\{j_\star\}$, \quad
$\MES^{g}\gets \MES^{g}_{\mathrm{cand}}$.

\ENDFOR

\STATE {\bfseries Output:} Selected token set $\mathcal{S}$.
\end{algorithmic}
\end{algorithm}
Directly solving the maximization problem formulated in Eq.~\ref{eq:mes_maximization} is an NP-hard combinatorial task. 
To maintain computational feasibility, we use a greedy forward-selection procedure and maintain the inverse Cholesky factor of \(\boldsymbol{K}_{\mathcal{S}}^{-1}\) incrementally \citep{golub2013matrix}, which avoids repeated matrix inversions during subset construction.

The greedy selection in Algorithm~\ref{alg:greedy_mes_decoding} is conceptually related to Orthogonal Matching Pursuit (OMP) \citep{pati1993orthogonal}. At each step, $(p_j-\boldsymbol{\alpha}_j^\top z)$ represents the conditional residual score of token $j$ after accounting for its correlation with the selected tokens, while $r_j$ normalizes it by the corresponding conditional variance. The marginal contribution is therefore measured by $\left[r_j(p_j-\boldsymbol{\alpha}_j^\top z)\right]^2$. The algorithm selects the token with the largest contribution and accepts it only when the penalized $\MES$ objective improves, favoring high-probability tokens while discounting redundant candidates.

\subsection{Theoretical Properties}
Furthermore, to rigorously validate the effectiveness of our proposed strategy, we establish theoretical properties of the greedy algorithm, showing its unimodality along the search path and deriving an approximation bound to the global optimum.

The following theorem states that, under a conditioning assumption on the token-similarity matrix, the greedy $\MES$ sequence cannot increase again once it stops increasing.

\begin{theorem}[Greedy Unimodality of Mahalanobis-Ensemble Decoding]
\label{thm:mes_saturation}
Let $|\mathcal{V}|=N$ and $\MES_t^g=\MES_{\lambda}(\mathcal{S}_t)$, where $\{\mathcal{S}_t\}_{t=0}^{N}$ is the greedy path generated by Algorithm~\ref{alg:greedy_mes_decoding}. For any $T\subseteq\mathcal{V}$, let $\boldsymbol{K}_{T}=(K_{ij})_{i,j\in T}$ and define
$
\kappa_N(\boldsymbol{K})
=
\max_{T\subseteq\mathcal{V},\ |T|\leq N}
\frac{\lambda_{\max}(\boldsymbol{K}_{T})}{\lambda_{\min}(\boldsymbol{K}_{T})}.
$
If $\kappa_N(\boldsymbol{K})\le 1+\lambda H_2^T(p)$,
once for some $t<N$,
\[
\MES_{t+1}^g\leq \MES_t^g,
\]
then for all $s\geq t$,
$
\MES_{s+1}^g\leq \MES_s^g.
$
Consequently, with 
$
\tau=\min\{t:\MES_{t+1}^g\leq \MES_t^g\},
$
we have
\[
\MES_{\tau}^g
=
\max_{0\leq r\leq N}\MES_r^g.
\]
\end{theorem}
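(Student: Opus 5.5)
The plan is to turn the statement into a claim about consecutive ratios of the unpenalized energy. Write $c_\lambda=1+\lambda H_2^T(\boldsymbol p)$ and $E_t=\MEE(\mathcal S_t)$. The denominator of $\MES_\lambda$ is exactly $c_\lambda^{|\mathcal S_t|}$, so
\[
\frac{\MES^g_{t+1}}{\MES^g_t}=\frac{1}{c_\lambda}\cdot\frac{E_{t+1}}{E_t}.
\]
Hence $\MES^g_{t+1}\le \MES^g_t$ is equivalent to $\rho_t:=E_{t+1}/E_t\le c_\lambda$. It therefore suffices to show that $\rho_t\le c_\lambda$ implies $\rho_{t+1}\le c_\lambda$, and then to argue by induction on $s\ge t$. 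The final identity $\MES^g_\tau=\max_r \MES^g_r$ then follows at once: the sequence strictly increases for $r<\tau$ by minimality of $\tau$, and is nonincreasing for $r\ge \tau$.

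For the step $\rho_t\le c_\lambda\Rightarrow\rho_{t+1}\le c_\lambda$, I would use the explicit marginal gain produced by the Cholesky update in Algorithm~\ref{alg:greedy_mes_decoding}. For $j\notin\mathcal S$,
\[
\MEE(\mathcal S\cup\{j\})-\MEE(\mathcal S)=g_j(\mathcal S):=\frac{\bigl(p_j-\boldsymbol k_{j\mathcal S}^\top\boldsymbol K_\mathcal S^{-1}\boldsymbol p_\mathcal S\bigr)^2}{K_{jj}-\boldsymbol k_{j\mathcal S}^\top\boldsymbol K_\mathcal S^{-1}\boldsymbol k_{j\mathcal S}}.
\]
This is exactly $[r_j(p_j-\boldsymbol\alpha_j^\top\boldsymbol z)]^2$ from the algorithm. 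Thus $E_{t+1}=E_t+g_{j_{t+1}}(\mathcal S_t)$, and greedy chooses the maximiser of the gain. The whole task reduces to comparing $g_{j_{t+2}}(\mathcal S_{t+1})/E_{t+1}$ with $g_{j_{t+1}}(\mathcal S_t)/E_t$.

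The first ingredient is a conditioning-based weak-submodularity inequality. It says that adding one element to $\mathcal S$ can inflate the gain of any remaining $j$ by at most a factor governed by $\kappa_N(\boldsymbol K)$. To prove it, I would write $E(T)=\|\boldsymbol K_T^{-1/2}\boldsymbol p_T\|^2$ and use the Rayleigh bounds $\|\boldsymbol p_T\|^2/\lambda_{\max}(\boldsymbol K_T)\le E(T)\le \|\boldsymbol p_T\|^2/\lambda_{\min}(\boldsymbol K_T)$. The numerator of $g_j$ is a squared residual and the denominator is a Schur complement, and both are controlled by the eigenvalues of principal submatrices. Eigenvalue interlacing carries these bounds to every $T$ with $|T|\le N$.

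The second ingredient compares $j_{t+2}$ with the choice greedy actually made. Greedy preferred $j_{t+1}$ at step $t$, so $g_{j_{t+2}}(\mathcal S_t)\le g_{j_{t+1}}(\mathcal S_t)=(\rho_t-1)E_t$. Combining the two ingredients should give an inequality of the form $\rho_{t+1}-1\le \kappa\,(\rho_t-1)E_t/E_{t+1}$, or more generally $\rho_{t+1}\le \phi(\rho_t,\kappa)$ for an explicit $\phi$. The hypothesis $\kappa_N(\boldsymbol K)\le c_\lambda$ is then what makes $\phi(\rho,\kappa)\le c_\lambda$ whenever $\rho\le c_\lambda$. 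The most natural way this happens is that the loss from $E_{t+1}\ge E_t$ absorbs the factor $\kappa$.

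The main obstacle is this last inequality. Naive Rayleigh bounds are too crude: they only give $\rho_{t+1}\le\kappa(1+p_j^2/\|\boldsymbol p_{\mathcal S_{t+1}}\|^2)$, which can exceed $c_\lambda$. My first attempt would be to bound the gain of $j_{t+2}$ relative to $\mathcal S_{t+1}$ by a $\kappa$-multiple of its gain relative to $\mathcal S_t$, working directly in the whitened coordinates $\boldsymbol R$. Orthogonal projections there make the residual numerators monotone, and only the Schur-complement denominators pick up the $\kappa$ factor. If that still leaves slack, I would instead aim for the cleaner sufficient condition $g_{j_{t+2}}(\mathcal S_{t+1})\le g_{j_{t+1}}(\mathcal S_t)\cdot c_\lambda$. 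Together with $E_{t+1}\ge E_t$ and $\rho_t\le c_\lambda$, this yields $\rho_{t+1}\le 1+(c_\lambda-1)\cdot(\text{factor}\le 1)$.
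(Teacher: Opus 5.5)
Your reduction matches the paper's: $\MES^g_{t+1}\le\MES^g_t$ iff $E_{t+1}/E_t\le c_\lambda$. Your target $\rho_{t+1}-1\le\kappa(\rho_t-1)E_t/E_{t+1}$ is exactly the paper's $R_{t+1}\le c_\lambda R_t/(1+R_t)$.

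The gap is the one inequality everything rests on, $g_{j_{t+2}}(\mathcal S_{t+1})\le c_\lambda\,g_{j_{t+1}}(\mathcal S_t)$. Your fallback only restates it, and your primary route to it is false.

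The per-element inflation bound $g_j(\mathcal S\cup\{i\})\le\kappa\,g_j(\mathcal S)$ fails for every finite $\kappa$. Take $\mathcal S=\varnothing$ and $K_{ij}=\delta\in(0,1)$, so $\kappa(\boldsymbol K_{\{i,j\}})=(1+\delta)/(1-\delta)$ is as close to $1$ as you like. Then $g_j(\varnothing)=p_j^2$ while $g_j(\{i\})=(p_j-\delta p_i)^2/(1-\delta^2)$, and the ratio diverges as $p_j\to0$.

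The reason is that the residual numerator $p_j-\boldsymbol k_{j\mathcal S}^\top\boldsymbol K_{\mathcal S}^{-1}\boldsymbol p_{\mathcal S}$ is a conditional cross-covariance, not a norm. It is not monotone in $\mathcal S$; only the Schur-complement denominator is. So conditioning on $i$ can create gain for $j$ from almost nothing. Consequently you cannot first bound $j$'s own inflation and then apply the greedy comparison $g_j(\mathcal S)\le g_i(\mathcal S)$. The greedy comparison has to enter inside the update.

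The missing idea, which is how the paper proceeds, is the rank-one partial-correlation update. Let $z_i,z_k$ be the signed standardized residuals at $\mathcal S$, so $g_i(\mathcal S)=z_i^2$, $g_k(\mathcal S)=z_k^2$, and greedy gives $|z_k|\le|z_i|$. Let $\varrho$ be the correlation of the $2\times 2$ Schur complement $\boldsymbol C_{\{i,k\}\mid\mathcal S}$. Then
\[
g_k(\mathcal S\cup\{i\})=\frac{(z_k-\varrho z_i)^2}{1-\varrho^2}\le\frac{(1+|\varrho|)^2}{1-\varrho^2}\,z_i^2=\frac{1+|\varrho|}{1-|\varrho|}\,g_i(\mathcal S).
\]
Moreover $\frac{1+|\varrho|}{1-|\varrho|}\le\kappa(\boldsymbol C_{\{i,k\}\mid\mathcal S})\le\kappa(\boldsymbol K_{\mathcal S\cup\{i,k\}})\le\kappa_N(\boldsymbol K)\le c_\lambda$. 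The first inequality holds for any $2\times2$ SPD matrix. The second holds because $\boldsymbol C_{\{i,k\}\mid\mathcal S}^{-1}$ is a principal submatrix of $\boldsymbol K_{\mathcal S\cup\{i,k\}}^{-1}$, so interlacing confines the eigenvalues of $\boldsymbol C_{\{i,k\}\mid\mathcal S}$ to $[\lambda_{\min},\lambda_{\max}]$ of $\boldsymbol K_{\mathcal S\cup\{i,k\}}$.

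Your closing step also needs the identity $E_{t+1}=\rho_tE_t$, not just $E_{t+1}\ge E_t$; the inequality alone only yields $\rho_{t+1}-1\le c_\lambda(c_\lambda-1)$. With the identity, $\rho_{t+1}-1\le c_\lambda(1-1/\rho_t)\le c_\lambda-1$, because $x\mapsto 1-1/x$ is increasing and $\rho_t\le c_\lambda$.

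Finally, $\MES^g_0=0$ and $E_0=0$, so your ratio formulation should start at $t\ge1$. This is harmless, because $\MES^g_1>\MES^g_0$ means the hypothesis never triggers at $t=0$.
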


Theorem~\ref{thm:mes_saturation} justifies the early stopping rule in Algorithm~\ref{alg:greedy_mes_decoding}: the greedy search can terminate once the $\MES$ score drops, without traversing all candidates. Appendix~\ref{app:unimodality_probe} complements this sufficient-condition result by disabling early stopping and recording complete greedy trajectories on Qwen2.5-1.5B. All 512 trajectories on each of GSM8K and GPQA are unimodal, providing empirical support for the stopping behavior characterized by Theorem~\ref{thm:mes_saturation}. The next theorem compares the stopped greedy value with the global optimum, showing that it provides an effective approximation to the optimal subset.

\begin{theorem}[Approximation Guarantee for Mahalanobis-Ensemble Decoding]
\label{thm:mes_stopping_approx}
Let
\[
\MES_{\lambda}^{\star}
=
\max_{\mathcal{S}\subseteq\mathcal{V}}
\MES_{\lambda}(\mathcal{S}),
\qquad |\mathcal{V}|=N.
\]
Let $\{\mathcal{S}_t\}_{t=0}^{N}$ be the greedy path generated by Algorithm~\ref{alg:greedy_mes_decoding}, and let $\tau$ be the stopping time defined in Theorem~\ref{thm:mes_saturation}. Under the assumptions of Theorem~\ref{thm:mes_saturation}, we have
\[
\MES_{\tau}^{g}
\geq
\left(1-\exp\{-\lambda_{\min}(\boldsymbol{K},N)\}\right)
\MES_{\lambda}^{\star},
\]
where
$\lambda_{\min}(\boldsymbol{K},r)
\triangleq
\min_{T\subseteq\mathcal{V},\ |T|=r}
\lambda_{\min}(\boldsymbol{K}_{T})$
denotes the smallest eigenvalue among all $r\times r$ principal submatrices of $\boldsymbol{K}$.
\end{theorem}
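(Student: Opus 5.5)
We prove Theorem~\ref{thm:mes_stopping_approx} with the standard greedy-versus-optimum argument for weakly submodular set functions. The submodularity ratio will be controlled by the spectrum of the principal submatrices of $\boldsymbol{K}$. Write $c=1+\lambda H_2^T(\boldsymbol{p})$ and $F(\mathcal{S})=\MEE(\mathcal{S})=\boldsymbol{p}_\mathcal{S}^\top\boldsymbol{K}_\mathcal{S}^{-1}\boldsymbol{p}_\mathcal{S}$.

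\textbf{Step 1 (properties of $F$).} By the Schur-complement identity used in Algorithm~\ref{alg:greedy_mes_decoding}, adding a token $j$ to $\mathcal{S}$ changes $F$ by
\[
\Delta_j(\mathcal{S})=r_j^2\bigl(p_j-\boldsymbol{\alpha}_j^\top\boldsymbol{z}\bigr)^2\ge 0 .
\]
Hence $F$ is monotone, and the greedy rule picks $\arg\max_j\Delta_j(\mathcal{S}_t)$.

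\textbf{Step 2 (weak submodularity).} Fix $\mathcal{S}_t$ and an optimal set $\mathcal{S}^\star$ with $|\mathcal{S}^\star|=k$. Let $U=\mathcal{S}^\star\setminus\mathcal{S}_t$. I will show
\[
F(\mathcal{S}_t\cup U)-F(\mathcal{S}_t)\le \frac{1}{\gamma}\sum_{j\in U}\Delta_j(\mathcal{S}_t),\qquad \gamma\ge\lambda_{\min}(\boldsymbol{K},N).
\]
This is the analogue of the Das--Kempe submodularity-ratio bound for $R^2$ in sparse regression. The joint gain is a quadratic form $\boldsymbol{u}^\top\boldsymbol{G}^{-1}\boldsymbol{u}$, where $\boldsymbol{u}$ is the vector of residuals $p_j-\boldsymbol{\alpha}_j^\top\boldsymbol{z}$ over $j\in U$ and $\boldsymbol{G}$ is the Schur complement of $\boldsymbol{K}_{\mathcal{S}_t}$ in $\boldsymbol{K}_{\mathcal{S}_t\cup U}$. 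The sum of single gains is $\sum_j u_j^2/G_{jj}$.

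The required comparison follows from two facts. First, $\lambda_{\min}(\boldsymbol{G})\ge\lambda_{\min}(\boldsymbol{K}_{\mathcal{S}_t\cup U})\ge\lambda_{\min}(\boldsymbol{K},N)$, using eigenvalue interlacing for Schur complements together with monotonicity of $\lambda_{\min}(\boldsymbol{K},r)$ in $r$. Second, $G_{jj}\le K_{jj}=1$.

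\textbf{Step 3 (recursion).} Step 2 gives
\[
F(\mathcal{S}_{t+1})-F(\mathcal{S}_t)\ge\frac{\gamma}{k}\bigl(F(\mathcal{S}^\star)-F(\mathcal{S}_t)\bigr).
\]
Iterating yields $F(\mathcal{S}_k)\ge(1-e^{-\gamma})F(\mathcal{S}^\star)$, using $F(\emptyset)=0$.

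\textbf{Step 4 (passing to $\MES$).} Since $|\mathcal{S}_k|=|\mathcal{S}^\star|=k$, both sets carry the same penalty $c^{-k}$. Therefore
\[
\MES_k^g\ge(1-e^{-\gamma})\MES_\lambda^\star .
\]
By Theorem~\ref{thm:mes_saturation}, $\MES_\tau^g=\max_r\MES_r^g\ge\MES_k^g$. Combining this with $\gamma\ge\lambda_{\min}(\boldsymbol{K},N)$ gives the claim.

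The main obstacle is Step 2, specifically obtaining the submodularity ratio with exactly $\lambda_{\min}(\boldsymbol{K},N)$ rather than a ratio of eigenvalues. My first attempt is
\[
\boldsymbol{u}^\top\boldsymbol{G}^{-1}\boldsymbol{u}\le\frac{\|\boldsymbol{u}\|^2}{\lambda_{\min}(\boldsymbol{G})}
\qquad\text{and}\qquad
\sum_j\frac{u_j^2}{G_{jj}}\ge\|\boldsymbol{u}\|^2,
\]
where the second inequality uses $G_{jj}\le1$; this is exactly what the kernel normalization $K_{jj}=1$ provides.

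A subtlety arises if $\lambda_{\min}(\boldsymbol{K},N)>1$ is claimed. That cannot happen, because the diagonal of $\boldsymbol{K}$ is $1$, so the bound remains meaningful and no truncation at $1$ is needed. If interlacing for the Schur complement gives only $\lambda_{\min}(\boldsymbol{G})\ge\lambda_{\min}(\boldsymbol{K}_{\mathcal{S}_t\cup U})$, I will invoke it directly via $\boldsymbol{G}^{-1}$ being a principal submatrix of $\boldsymbol{K}_{\mathcal{S}_t\cup U}^{-1}$.
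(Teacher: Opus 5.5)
Your proposal is correct and follows the paper's proof. You bound the submodularity ratio of $\MEE$ below by $\lambda_{\min}(\boldsymbol{K},N)$ using the Schur complement and the unit diagonal of $\boldsymbol{K}$, apply the Das--Kempe greedy guarantee, divide by the common penalty $c_\lambda^{k}$ at the optimal size, and invoke Theorem~\ref{thm:mes_saturation} for $\tau$. The differences are minor: you inline the Das--Kempe recursion, and you get $\lambda_{\min}(\boldsymbol{G})\ge\lambda_{\min}(\boldsymbol{K}_{\mathcal{S}_t\cup U})$ directly from $\boldsymbol{G}^{-1}$ being a principal submatrix of $\boldsymbol{K}_{\mathcal{S}_t\cup U}^{-1}$, whereas the paper iterates one-step residualization (Lemma~\ref{lem:A2}) and normalizes to a correlation matrix.
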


Theorem~\ref{thm:mes_stopping_approx} establishes the relation between the optimal greedy value and the globally optimal $\MES$ value. The approximation factor depends on the restricted minimum eigenvalue of $\boldsymbol{K}$, which reflects the non-redundancy and conditioning of the selected token representations.

\section{Experiments}
\begin{table*}[t!]
\centering
\footnotesize
\setlength{\tabcolsep}{5pt}
\begin{tabular}{lccc|ccc|ccc|c}
\toprule
& \multicolumn{3}{c|}{Qwen3-4B-Inst.}
& \multicolumn{3}{c|}{Phi-4-mini-Inst.}
& \multicolumn{3}{c|}{Mistral-7B-Inst.}
&  \\
\cmidrule(lr){2-4} \cmidrule(lr){5-7} \cmidrule(lr){8-10}
Method
& $T=1.0$ & $T=1.5$ & $T=2.0$
& $T=1.0$ & $T=1.5$ & $T=2.0$
& $T=1.0$ & $T=1.5$ & $T=2.0$
& Avg. \\
\midrule
Min-$p$
& 70.74 & 62.02 & 54.06
& 79.68 & 70.81 & 30.86
& 48.67 & 36.32 & 17.97
& 52.35 \\

Top-$p$
& 68.01 & 55.19 & 21.00
& 80.06 & 11.75 & 0.30
& 48.90 & 23.05 & 0.45
& 34.30 \\

$p$-less
& \underline{78.92} & 72.71 & 63.15
& \underline{84.05} & 83.09 & 69.52
& \underline{54.06} & 50.80 & 46.47
& 66.97 \\

Top-$H$
& 76.80 & 67.17 & 58.53
& 81.65 & 73.77 & 34.65
& 52.01 & 46.78 & 22.52
& 57.10 \\

Top-$W$
& 77.71 & \underline{76.65} & \underline{74.98}
& 82.64 & \underline{83.24} & \underline{82.18}
& 53.98 & \underline{53.15} & \underline{51.02}
& \underline{70.62} \\
\rowcolor{gray!18} 
\textbf{Ours}
& \textbf{80.67} & \textbf{79.76} & \textbf{80.06}
& \textbf{84.08} & \textbf{84.23} & \textbf{82.41}
& \textbf{55.34} & \textbf{53.45} & \textbf{53.90}
& \textbf{72.66} \\
\bottomrule
\end{tabular}
\caption{GSM8K accuracy (\%) across different temperatures and decoding methods. The Avg. column reports the average accuracy over all models and temperatures. The best result is in \textbf{bold} and the second best is \underline{underlined}.}
\label{tab:gsm8k_results}
\end{table*}

\begin{table*}[t!]
\centering
\footnotesize
\setlength{\tabcolsep}{5pt}
\begin{tabular}{lccc|ccc|ccc|c}
\toprule
& \multicolumn{3}{c|}{Qwen3-4B-Inst.}
& \multicolumn{3}{c|}{Phi-4-mini-Inst.}
& \multicolumn{3}{c|}{Mistral-7B-Inst.}
&  \\
\cmidrule(lr){2-4} \cmidrule(lr){5-7} \cmidrule(lr){8-10}
Method
& $T=1.0$ & $T=1.5$ & $T=2.0$
& $T=1.0$ & $T=1.5$ & $T=2.0$
& $T=1.0$ & $T=1.5$ & $T=2.0$
& Avg. \\
\midrule
Min-$p$
& 34.38 & 32.59 & 35.04
& 27.01 & 29.02 & 26.56
& 26.12 & 28.12 & 25.22
& 29.34 \\

Top-$p$
& 34.60 & \underline{35.49} & 30.58
& 31.25 & 27.01 & 15.40
& 25.89 & 26.41 & 14.06
& 26.74 \\

$p$-less
& \underline{35.27} & 34.82 & \underline{35.49}
& \underline{33.26} & 33.26 & 27.90
& 27.68 & 27.01 & 24.78
& 31.08 \\

Top-$H$
& 34.38 & 33.93 & 34.15
& \textbf{34.38} & \underline{33.93} & 29.91
& \underline{28.12} & 27.23 & 23.88
& 31.10 \\

Top-$W$
& 34.82 & \textbf{36.83} & 35.27
& 31.70 & 29.46 & \underline{31.47}
& 27.23 & \textbf{29.02} & \underline{28.23}
& \underline{31.56} \\

\rowcolor{gray!18} 
\textbf{Ours}
& \textbf{35.49} & \underline{35.49} & \textbf{35.71}
& \textbf{34.38} & \textbf{36.16} & \textbf{33.93}
& \textbf{28.35} & \underline{28.79} & \textbf{28.35}
& \textbf{32.96} \\
\bottomrule
\end{tabular}
\caption{GPQA accuracy (\%) across different temperatures and decoding methods. The Avg. column reports the average accuracy over all models and temperatures. The best result is in \textbf{bold} and the second best is \underline{underlined}.}
\label{tab:gpqa_results}
\end{table*}
\subsection{Experimental Details}
\label{sec:experiment_details}

\paragraph{Models and Baselines.}
We evaluate ME-Decoding on three instruction-tuned language models: Qwen3-4B-Instruct \citep{yang2025qwen3}, Phi-4-mini-Instruct \citep{abouelenin2025phi}, and Mistral-7B-Instruct \citep{jiang2023mistral7b}. We compare our method with several representative decoding methods, including Min-$p$ \citep{nguyen2025turning}, Top-$p$ \citep{holtzman2019curious}, $p$-less \citep{tan2025p}, Top-$H$ \citep{baghaei2026top}, and Top-$W$  \citep{davoodi2026geometry}. For a fair comparison, all methods are evaluated under the same prompts, temperature settings, maximum generation length, and stopping criteria. Unless otherwise specified, we evaluate all methods at temperatures $T\in\{1.0,1.5,2.0\}$. All reported experiments use $N=512$.
We additionally implement CraEG \citep{yang2026decoding} and combine its post-softmax reweighting with $p$-less, following its plug-in formulation. Repeated results for selected methods are reported in Appendix~\ref{app:repeated_reasoning}.

\paragraph{Benchmarks and Metrics.}
We consider two types of tasks. First, for reasoning benchmarks, we evaluate on GSM8K \citep{cobbe2021training} and GPQA \citep{rein2023gpqa}, where performance is measured by answer accuracy after extracting the final answer. These benchmarks test whether a decoding method can maintain reliable reasoning performance under different sampling temperatures. Second, for instruction-following and chat-style generation, we evaluate on AlpacaEval \citep{alpaca_eval} and MT-Bench \citep{zheng2023judgingllmasajudgemtbenchchatbot}. For AlpacaEval, we report the candidate win-rate (\%), and for MT-Bench, we report the average judge score. Since these tasks involve open-ended generation, we use DeepSeek-V4-Pro \citep{deepseekai2026deepseekv4} as the judge to compare the generated responses under a fixed evaluation protocol.
For reproducibility, AlpacaEval compares each response with a fixed reference for the same prompt, whereas MT-Bench independently scores each response on a 1--10 scale. Appendix~\ref{app:judge_robustness} reports an additional evaluation using GLM-5.2 as a second judge, together with paired prompt-level bootstrap confidence intervals.

\paragraph{ME-Decoding Configuration.}
ME-Decoding selects a compact token subset by optimizing a Mahalanobis-style objective that balances token probability and embedding geometry. For Top-$W$ and ME-Decoding, both of which require an explicit candidate pool, we use the same top-$N$ pool with $N=512$. Except for the candidate-pool size, Top-$W$ uses its default hyperparameters. Top-$p$, Min-$p$, and Top-$H$ follow \citet{davoodi2026geometry}, and $p$-less is evaluated under its original parameter-free rule. ME-Decoding uses $\lambda=0.9$ by default, following Appendix~\ref{app:additional_experiments}. The official implementation is available at \href{https://github.com/sapphirexdy/ME_decoding}{https://github.com/sapphirexdy/ME\_decoding}.

\paragraph{Temperature Placement.}
At temperature $T$, every method operates on the same distribution $p_T=\operatorname{softmax}(z/T)$.
ME-Decoding uses $p_T$ for candidate scoring and samples from the renormalized selected support, while each baseline applies its truncation or reweighting rule to the same temperature-adjusted distribution.
\begin{figure*}[ht]
    \centering
    \includegraphics[width=\textwidth]{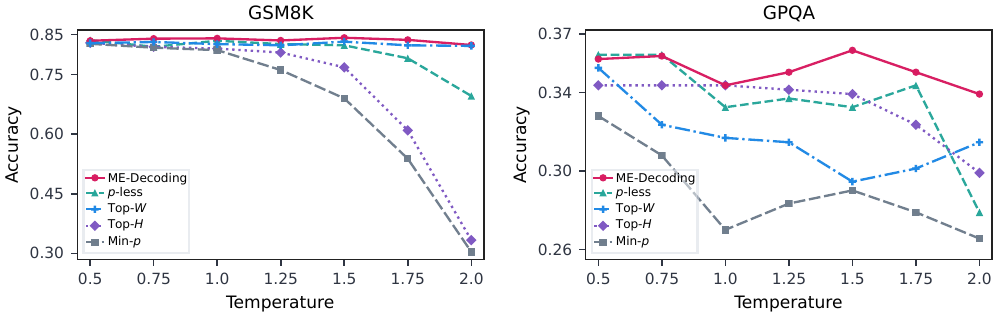}
    \caption{Accuracy vs. temperature curves of different decoding methods on GSM8K and GPQA.}
    \label{fig:acc_vs_temp}
\end{figure*}
\subsection{Main Results}
\label{sec:main_results}

\begin{figure*}[t]
    \centering
    \includegraphics[width=\textwidth]{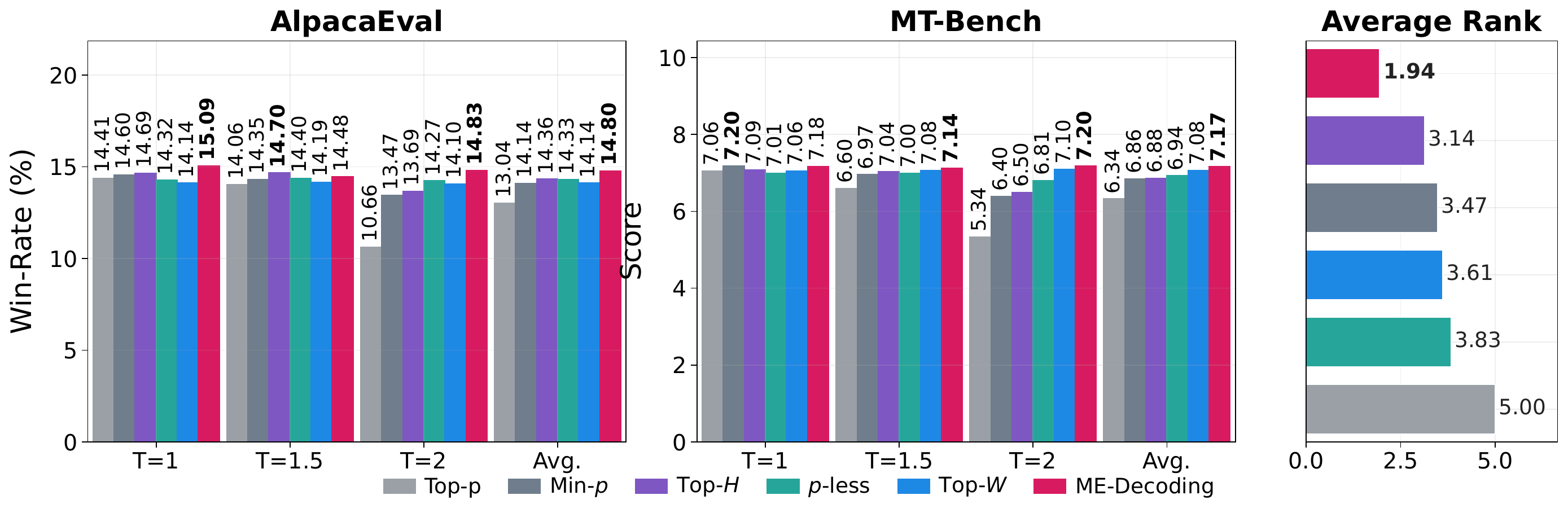}
    \caption{
    Open-ended generation performance and overall ranking comparison.
    The \textbf{left} and \textbf{middle} panels report AlpacaEval win rate and MT-Bench score across different temperatures, averaged over three models and aggregated over 3 runs.
    The \textbf{right} panel summarizes the average rank of each decoding method across all models, temperatures, and benchmarks, with bold numbers indicating the best overall rank.
    }
    \label{fig:instruction_chat_avg_rank}
\end{figure*}

\paragraph{Reasoning Benchmarks.}
We first evaluate ME-Decoding on GSM8K and GPQA across three models and three temperatures. The results are reported in Tables~\ref{tab:gsm8k_results} and~\ref{tab:gpqa_results}. Overall, ME-Decoding achieves the best average performance across both datasets.
Appendix~\ref{app:repeated_reasoning} reports three-seed repetitions as mean $\pm$ sample standard deviation and includes CraEG combined with $p$-less following its plug-in formulation.

Figure~\ref{fig:acc_vs_temp} further shows the accuracy--temperature curves of different decoding methods. Compared with existing baselines, ME-Decoding maintains a more stable accuracy profile as the temperature increases. Higher temperatures enlarge the sampling space, making probability-only truncation methods more prone to noisy tokens. By incorporating token geometry, ME-Decoding better preserves high-confidence candidates while filtering out uninformative ones, leading to a better balance between exploration and reasoning reliability.

\paragraph{Instruction-following and Chat.}

We further evaluate ME-Decoding on instruction-following and chat-style generation tasks using MT-Bench and AlpacaEval. Experiments are conducted on Qwen3-4B, Phi-4-mini, and Mistral-7B under $T\in\{1.0,1.5,2.0\}$. Figure~\ref{fig:instruction_chat_avg_rank} summarizes the results. The left and middle panels report the average AlpacaEval win rate and MT-Bench score over the three models at each temperature, while the right panel presents the overall average rank across all models, temperatures, and benchmarks. Detailed numerical results are provided in the Appendix.

As shown in Figure~\ref{fig:instruction_chat_avg_rank}, ME-Decoding achieves strong performance on both open-ended benchmarks. It obtains the best averaged AlpacaEval win rate and MT-Bench score. The overall rank comparison further shows that ME-Decoding attains the best average rank among all compared decoding methods, indicating its stable advantage across models, temperatures, and benchmarks. These results suggest that ME-Decoding is not limited to accuracy-oriented reasoning tasks, but also improves open-ended generation by selecting a compact and informative token set.
The same conclusion holds under a second automatic judge. Paired prompt-level bootstrap intervals against Top-$W$, $p$-less, and Top-$H$ are positive for both benchmarks and both judges, as reported in Appendix~\ref{app:judge_robustness}.

\begin{table*}[t]
\centering
\footnotesize
\setlength{\tabcolsep}{4.5pt}
\renewcommand{\arraystretch}{0.85}
\resizebox{\textwidth}{!}{
\begin{tabular}{llcccccc}
\toprule
Setting & Statistic
& Top-$p$ & Min-$p$ & $p$-less & Top-$H$ & Top-$W$
& \textbf{ME-Decoding} \\
\midrule
\multirow{2}{*}{\textbf{Large}}
& Mean s/token
& 0.01407 & 0.00196 & 0.00314 & 0.01493 & 0.13337 & 0.01799 \\
& Standard Deviation
& 0.00006 & 0.00002 & 0.00000 & 0.00009 & 0.00098 & 0.00560 \\
\midrule
\multirow{2}{*}{\textbf{Medium}}
& Mean s/token
& 0.00335 & 0.00054 & 0.00080 & 0.00330 & 0.00487 & 0.00179 \\
& Standard Deviation
& 0.00003 & 0.00001 & 0.00001 & 0.00005 & 0.00003 & 0.00010 \\
\bottomrule
\end{tabular}
}
\caption{
Average CPU sampling overhead per token on synthetic logits.
\textbf{Large}: vocabulary size 128K, embedding dimension 1024, top-$N=2048$.
\textbf{Medium}: vocabulary size 32K, embedding dimension 256, top-$N=512$.
}
\label{tab:sampling_time}
\end{table*}

\subsection{Analysis}

\paragraph{Complexity Analysis.}
We analyze the time complexity of ME-Decoding in Algorithm~\ref{alg:greedy_mes_decoding}. 
Let $N$ be the candidate-pool size, $d$ the embedding dimension, and $\tau$ the number of selected tokens before early stopping. 
The kernel computation costs $\mathcal{O}(N\tau d)$, and the greedy evaluation costs $\mathcal{O}(N\tau^3)$, leading to
$\mathcal{O}\bigl(N\tau(d+\tau^2)\bigr)$.
Since early stopping usually yields a small $\tau$ with $\tau \ll N$, the practical cost scales nearly linearly with $N$ and approaches $O(Nd)$ when $\tau$ is treated as a small constant.
A detailed analysis is provided in Appendix~\ref{app:complexity}.

\paragraph{Efficiency Analysis.}
To compare inference-time efficiency, we use a CPU-only synthetic logits benchmark that isolates sampling and logits-processing overhead.
We measure the average time for filtering logits and sampling one token under two settings: Medium with vocabulary size $32{,}000$, embedding dimension $256$, and top-$N=512$; and Large with vocabulary size $128{,}000$, embedding dimension $1024$, and top-$N=2048$.
All experiments use batch size $1$, one CPU thread, temperature $1.0$, $50$ warm-up steps, and $500$ measured steps over $10$ repeats.
Results are reported in Table~\ref{tab:sampling_time}.

As shown in Table~\ref{tab:sampling_time}, ME-Decoding introduces moderate overhead compared with purely probability-based methods due to its use of token embeddings, but remains highly efficient. It is about $2.7\times$ faster than Top-$W$ in the Medium setting and $7.4\times$ faster in the Large setting. In the Large setting, its overhead is also comparable to Top-$p$ and Top-$H$, demonstrating that ME-Decoding can exploit token-geometry information with substantially lower cost than existing geometry-aware decoding methods.
Appendix~\ref{app:gpu_efficiency} additionally reports end-to-end GPU latency and shows that model inference remains the dominant cost under the measured settings.

\paragraph{Diversity Analysis.}

We analyze the accuracy--diversity trade-off on GSM8K with Qwen3-4B. 
For each method, we randomly sample 500 questions and generate 10 responses per question. 
Accuracy is computed over all generations, and diversity is measured by Distinct-1/2 \citep{li2016diversity} and Self-BLEU \citep{zhu2018texygen}. 
We further average the min-max normalized Distinct-1, Distinct-2, and $1-\mathrm{Self\text{-}BLEU}$ within each temperature as an aggregated diversity score.

\begin{figure}[ht]
    \centering
    \includegraphics[width=\columnwidth]{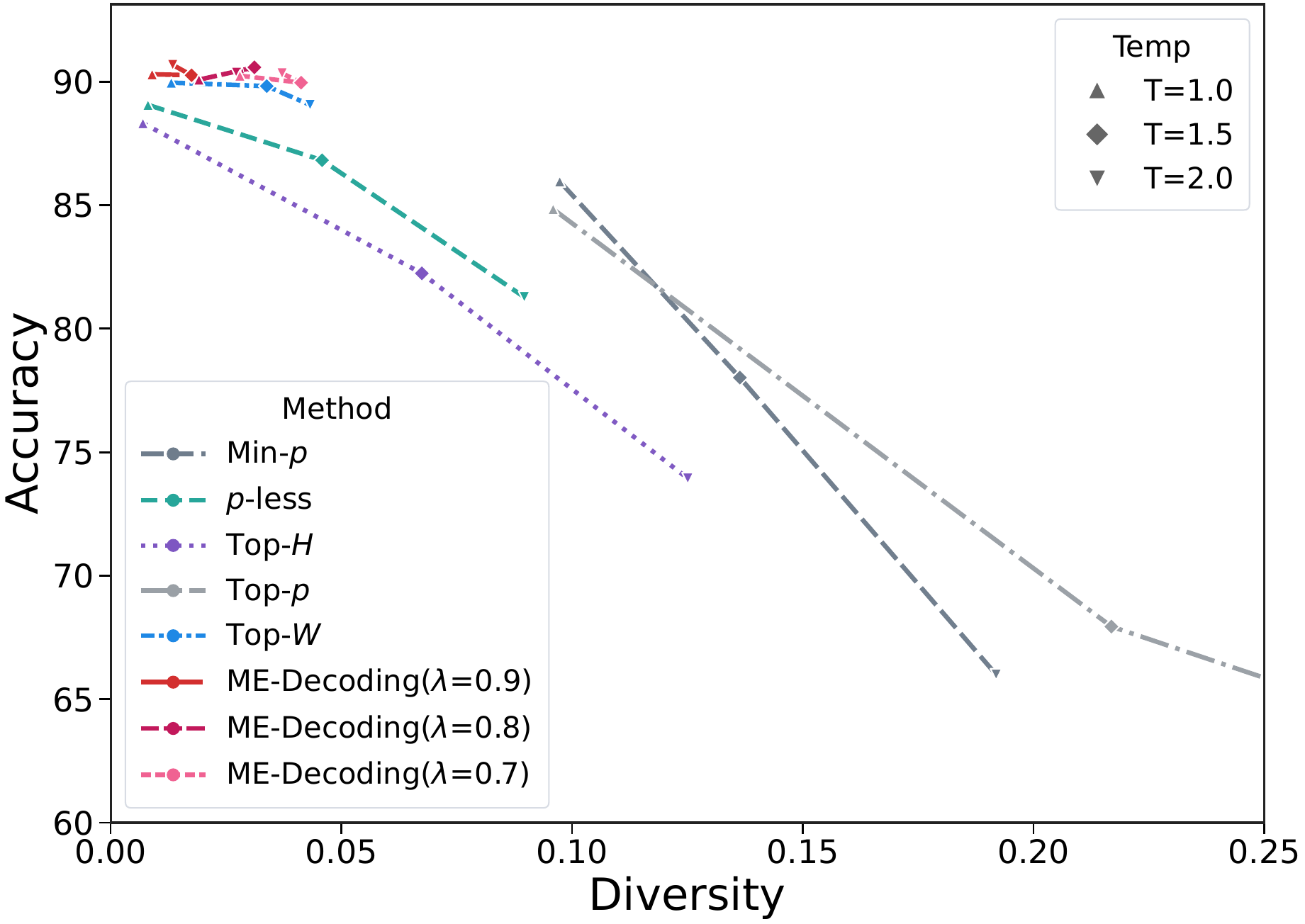}
    \caption{
    Accuracy--diversity trade-off on GSM8K using Qwen3-4B.
    Each method generates 10 responses for 500 sampled questions under each temperature.
    }
    \label{fig:diversity_vs_acc}
\end{figure}

Figure~\ref{fig:diversity_vs_acc} shows a clear accuracy--diversity trade-off.
More exploratory methods, such as Top-$p$, achieve higher diversity, especially at high temperatures, but suffer from notable accuracy degradation.
In contrast, ME-Decoding consistently achieves the highest accuracy and maintains stronger reasoning performance at comparable diversity levels.
This suggests that the proposed Mahalanobis-Ensemble objective improves token selection reliability by constructing a compact and informative sampling support, rather than simply increasing output randomness.
Detailed results are provided in Table~\ref{tab:diversity_temperature}.

To further test whether ME-Decoding constructs an informative geometry-aware support rather than merely truncating more aggressively, we compare it with probability-only controls calibrated on independent prompts to match its average support size, post-pruning entropy, or support-level pairwise cosine. At each step with $|S|\geq2$, Cos. is the mean cosine over all unordered pairs of L2-normalized selected-token embeddings; we then average it across eligible steps. Table~\ref{tab:support_diagnostics_main} reports results on Qwen2.5-1.5B at $T=1.0$ over three generation seeds.

\begin{table}[ht]
\centering
\footnotesize
\setlength{\tabcolsep}{2.4pt}
\resizebox{\columnwidth}{!}{
\begin{tabular}{llcccc}
\toprule
Dataset & Method & Acc. (\%) & Avg. $|S|$ & Cos. & $\MES$ \\
\midrule
\multirow{4}{*}{GSM8K}
& \textbf{ME-Decoding} & $\mathbf{63.20\pm0.57}$ & 1.076 & 0.151 & \textbf{0.756} \\
& Size-matched & $61.79\pm1.26$ & 1.067 & 0.210 & 0.707 \\
& Entropy-matched & $61.79\pm0.99$ & 1.070 & 0.208 & 0.708 \\
& Cosine-matched & $62.70\pm0.35$ & 1.014 & 0.198 & 0.715 \\
\midrule
\multirow{4}{*}{GPQA}
& \textbf{ME-Decoding} & $\mathbf{32.66\pm1.05}$ & 1.134 & 0.243 & \textbf{0.589} \\
& Size-matched & $28.47\pm1.17$ & 1.106 & 0.289 & 0.558 \\
& Entropy-matched & $28.47\pm1.17$ & 1.106 & 0.289 & 0.558 \\
& Cosine-matched & $29.25\pm0.91$ & 1.084 & 0.259 & 0.537 \\
\bottomrule
\end{tabular}}
\caption{ME-Decoding and approximately matched probability-only controls on Qwen2.5-1.5B at $T=1.0$. Accuracy is reported as mean $\pm$ sample standard deviation over three seeds.}
\label{tab:support_diagnostics_main}
\end{table}

Across both datasets, ME-Decoding achieves the highest accuracy and $\MES$ under approximately matched support size, entropy, or pairwise cosine. This result is consistent with our design goal of selecting a high-confidence support whose members provide complementary geometric information, and supports the effectiveness of directly optimizing $\MES$. Appendix~\ref{app:support_diagnostics} reports the complete diagnostics, including retained probability mass and entropy before pruning. Appendix~\ref{app:kernel_controls} further isolates the contribution of correctly aligned token geometry using identity- and permuted-kernel controls.


\section{Conclusion}
In this paper, we introduce ME-Decoding, a decoding framework that reformulates LLM token selection as ensemble pruning. 
ME-Decoding maximizes the \emph{Mahalanobis-Ensemble Score} ($\MES$) to select a compact token subset, combining token probabilities with an embedding-based similarity matrix to discount semantic redundancy. 
We further develop an efficient greedy algorithm with theoretical guarantees, including greedy-trajectory unimodality and an approximation bound. 
Experiments on reasoning and open-ended generation tasks show that ME-Decoding improves over representative probability-based and geometry-aware baselines with low inference overhead.

These results demonstrate the potential of ensemble pruning as a principled perspective for constructing compact, diverse, and reliable token candidate sets. Future work includes developing simpler and more effective objectives under this perspective, and extending ME-Decoding to broader scenarios such as long-form generation, multi-sample reasoning, and verification-augmented decoding.

\section*{Limitations}

Although ME-Decoding introduces an ensemble-pruning perspective for LLM decoding and achieves strong performance on several tasks, it still has several limitations. First, its performance depends on the construction of the similarity matrix, which controls the trade-off between token confidence and redundancy. While we provide a practical geometry-based kernel, more effective and adaptive similarity designs may further improve performance. The current kernel uses static token embeddings as a soft redundancy signal and may not capture all context-dependent semantic distinctions. Contextualized or hidden-state-conditioned kernels are therefore a useful direction for future work.
Second, although ME-Decoding consistently achieves higher accuracy than competing methods under comparable diversity levels, its output diversity is not always the highest among all decoding strategies. This suggests that the current kernel construction may still be conservative in promoting diverse generations. How to further improve generation diversity while preserving the accuracy gains of ME-Decoding remains an important direction for future work.
Third, determining the optimal subset size remains challenging, as in existing truncation-based decoding methods. The proposed $\MES$ criterion provides a principled selection rule, but it may not be optimal for all token distributions. Exploring adaptive subset-size rules and alternative objectives is another promising direction.

\section*{Ethical Considerations}

This work proposes ME-Decoding, a geometry-aware decoding framework that reformulates candidate token selection from the perspective of ensemble pruning. By selecting compact yet informative token sets, ME-Decoding aims to improve generation quality and reasoning performance without modifying model parameters or requiring additional training. A potential positive impact is that such plug-and-play inference-time methods may improve the usability of existing language models with limited computational overhead, thereby reducing deployment costs and the need for repeated model retraining.

At the same time, improved decoding and reasoning performance may also amplify downstream risks associated with generative systems. These risks include misinformation generation, hallucinated but plausible outputs, biased or harmful content, and the automation of low-quality or malicious text at scale. Since ME-Decoding operates at inference time and can be applied to a wide range of pretrained language models, its broader impact depends strongly on the deployment context, access control, and downstream use cases.

We encourage practitioners to use ME-Decoding together with established responsible deployment practices, including safety evaluation, bias and toxicity assessment, hallucination analysis, usage policies, and rate limits in open-ended settings. For high-stakes applications such as medical, legal, financial, or educational decision-making, model outputs should be carefully verified by qualified human experts. In addition, when ME-Decoding is applied to models trained on copyrighted, private, or sensitive data, developers should follow appropriate data governance, documentation, and licensing practices.

All datasets, models, and evaluation tools used in this work are publicly available. We follow their respective licenses and terms of use, and use them solely for research evaluation purposes. Our use of these artifacts is consistent with their intended use as benchmarks, pretrained models, and evaluation tools for research. We do not redistribute or repurpose any artifact beyond the scope allowed by its original access conditions, and we cite the original creators of all benchmarks, models, and baseline methods used in our experiments.

We do not collect any new user data or personally identifying information. The datasets used in our experiments are publicly available benchmarks released for research evaluation. We use them only under their standard evaluation settings and do not attempt to identify individuals or recover private information from any dataset.

\section*{Acknowledgments}

This work was supported by the Outstanding Innovative Talents Cultivation Funded Programs 2026 of Renmin University of China and the National Natural Science Foundation of China under Grant No.~12571301.



\bibliography{custom}

\input{appendix}

\end{document}

%% file: appendix.tex
\clearpage
\appendix

\onecolumn
\section{Proofs of Theoretical Results}

\label{app:proof}

\begin{proof}[\textbf{Proof of Theorem ~\ref{thm:mes_saturation}}]
Let $\MEE(S)=\boldsymbol{p}_S^\top\boldsymbol{K}_S^{-1}\boldsymbol{p}_S$ and $c_\lambda=1+\lambda H_2^T(p)$. Write $F_t=\MEE(S_t)$ and $\Delta_t=\MEE(S_{t+1})-\MEE(S_t)$. Then the greedy objective is defined as $\MES_t^g=\frac{F_t}{c_\lambda^t}$.

We first show that the restricted condition number controls the conditional residual correlations. Fix any subset $S\subseteq\mathcal{V}$ and two indices $i,k\notin S$. Define the conditional residual covariance matrix of $(i,k)$ given $S$ as the Schur complement:$$    \boldsymbol{C}_{\{i,k\}\mid S} = K_{\{i,k\}} - \boldsymbol{K}_{\{i,k\},S}\boldsymbol{K}_S^{-1}\boldsymbol{K}_{S,\{i,k\}}.$$Write$$    \boldsymbol{C}_{\{i,k\}\mid S} = 
    \begin{pmatrix}
        v_i & c_{ki\mid S}\\
        c_{ki\mid S} & v_k
    \end{pmatrix},$$where $v_i=K_{ii}-\boldsymbol{K}_{iS}\boldsymbol{K}_S^{-1}\boldsymbol{K}_{Si}$, $v_k=K_{kk}-\boldsymbol{K}_{kS}\boldsymbol{K}_S^{-1}\boldsymbol{K}_{Sk}$, and $c_{ki\mid S}=K_{ki}-\boldsymbol{K}_{kS}\boldsymbol{K}_S^{-1}\boldsymbol{K}_{Si}$. The corresponding conditional residual correlation is$$    \rho_{ki\mid S} = \frac{c_{ki\mid S}}{\sqrt{v_iv_k}}.$$Since $\boldsymbol{C}_{\{i,k\}\mid S}$ is the Schur complement of $\boldsymbol{K}_S$ in $\boldsymbol{K}_{S\cup\{i,k\}}$, its eigenvalues are bounded by those of $\boldsymbol{K}_{S\cup\{i,k\}}$. More precisely,$$     \lambda_{\min}(\boldsymbol{K}_{S\cup\{i,k\}}) \leq \lambda_{\max}(\boldsymbol{C}_{\{i,k\}\mid S}) \leq \lambda_{\max}(\boldsymbol{K}_{S\cup\{i,k\}}).$$These inequalities follow from the variational characterization of eigenvalues and the Schur-complement residualization argument. Therefore,$$    \kappa(\boldsymbol{C}_{\{i,k\}\mid S}) \leq \kappa(\boldsymbol{K}_{S\cup\{i,k\}}) \leq \kappa_N(\boldsymbol{K}).$$On the other hand, for any $2\times2$ positive definite matrix$$    \boldsymbol{C}=
    \begin{pmatrix}
        v_i & c\\
        c & v_k
    \end{pmatrix},$$with correlation $\rho=\frac{c}{\sqrt{v_iv_k}}$, we have:$$    |\rho|\leq \frac{\kappa(\boldsymbol{C})-1}{\kappa(\boldsymbol{C})+1},$$which gives$$    \kappa(\boldsymbol{C}) \geq \frac{1+|\rho|}{1-|\rho|}.$$Applying this to $\boldsymbol{C}_{\{i,k\}\mid S}$ gives$$    \frac{1+|\rho_{ki\mid S}|}{1-|\rho_{ki\mid S}|} \leq \kappa(\boldsymbol{C}_{\{i,k\}\mid S}) \leq \kappa_N(\boldsymbol{K}).$$By the assumption $\kappa_N(\boldsymbol{K})\leq c_\lambda$, we obtain\begin{equation}\frac{1+|\rho_{ki\mid S}|}{1-|\rho_{ki\mid S}|} \leq c_\lambda.\label{eq:rho_bound}\end{equation}Next, we prove the one-step growth control of greedy marginal gains. Fix a greedy step $t$, let $S=S_t$, and let $i$ be the element added at step $t$, i.e., $S_{t+1}=S_t\cup\{i\}$. For any remaining token $k\notin S\cup\{i\}$, define the standardized residual scores
    $$    z_i = \frac{p_{i}-\boldsymbol{K}_{iS}\boldsymbol{K}_S^{-1}\boldsymbol{p}_S}{\sqrt{K_{ii}-\boldsymbol{K}_{iS}\boldsymbol{K}_S^{-1}\boldsymbol{K}_{Si}}},$$
    and define $z_k$ analogously. By the Schur-complement formula, $\Delta \MEE(i\mid S)=z_i^2$ and $\Delta \MEE(k\mid S)=z_k^2$. Since $i$ is selected greedily, $z_k^2\leq z_i^2$. If $z_i=0$, then all remaining marginal gains are zero and the conclusion is immediate. Otherwise, define $r_k=\frac{z_k}{z_i}$, where $|r_k|\leq1$. After adding $i$, the marginal gain of $k$ is$$    \Delta \MEE(k\mid S\cup\{i\}) = \frac{(z_k-\rho_{ki\mid S}z_i)^2}{1-\rho_{ki\mid S}^2}.$$
    Hence

\begin{equation}
        \frac{\Delta \MEE(k\mid S\cup\{i\})}{\Delta \MEE(i\mid S)} 
    = \frac{(r_k-\rho_{ki\mid S})^2}{1-\rho_{ki\mid S}^2} 
    \leq \frac{(1+|\rho_{ki\mid S}|)^2}{1-\rho_{ki\mid S}^2} 
    = \frac{1+|\rho_{ki\mid S}|}{1-|\rho_{ki\mid S}|} 
    \leq c_\lambda,
\end{equation}

where the last inequality follows from \eqref{eq:rho_bound}. Taking the maximum over all remaining $k$ gives\begin{equation}\Delta_{t+1}\leq c_\lambda\Delta_t.\label{eq:delta_bound}\end{equation}Now suppose $\MES_{t+1}^g\leq \MES_t^g$. Since $\MES_t^g=\frac{F_t}{c_\lambda^t}$, this is equivalent to$$    \frac{F_{t+1}}{c_\lambda^{t+1}} \leq \frac{F_t}{c_\lambda^t}.$$Using $F_{t+1}=F_t+\Delta_t$, we get $F_t+\Delta_t\leq c_\lambda F_t$. Define $R_t=\frac{\Delta_t}{F_t}$. Then\begin{equation}\MES_{t+1}^g\leq \MES_{t}^g \quad\Longleftrightarrow\quad R_t\leq c_\lambda-1.\label{eq:rt_bound}\end{equation}Using \eqref{eq:delta_bound}, we have$$\begin{aligned}
    R_{t+1} 
    &= \frac{\Delta_{t+1}}{F_{t+1}} 
    = \frac{\Delta_{t+1}}{F_t+\Delta_t} \\
    &\leq \frac{c_\lambda\Delta_t}{F_t+\Delta_t} 
    = \frac{c_\lambda R_t}{1+R_t}.
\end{aligned}$$The function $h(R)=\frac{c_\lambda R}{1+R}$ is increasing for $R\geq0$. Therefore, if $R_t\leq c_\lambda-1$, then$$    R_{t+1} \leq h(R_t) \leq h(c_\lambda-1) = c_\lambda-1.$$By induction, $R_s\leq c_\lambda-1$ for all $s\geq t$. Using \eqref{eq:rt_bound}, we obtain$$    \MES_{s+1}^g\leq \MES_s^g, \qquad \forall s\geq t.$$Finally, define$$    \tau=\min\{t\in\{0,\dots,N-1\}:\MES_{t+1}^g\leq \MES_t^g\},$$
the sequence increases strictly before $\tau$ and is nonincreasing after $\tau$. Hence$$    \MES_{\tau}^g = \max_{0\leq r\leq N}\MES_r^g.$$This completes the proof.\end{proof}

\begin{lemma}[Approximation Guarantee for Greedy Selection]
\label{lemma1}
Let $f: 2^U \to \mathbb{R}_{\ge 0}$ be a normalized ($f(\emptyset)=0$), nonnegative, monotone set function, and let $\mathrm{OPT} = \max_{|S| \le k} f(S)$ denote the maximum value obtained by any set of size at most $k$. Let $S$ be the set selected by the Greedy algorithm. Then, the solution satisfies the following approximation guarantee:
\begin{equation}
    f(S) \geq \left( 1 - e^{-\gamma_{U,k}} \right) \cdot \mathrm{OPT},
\end{equation}
where $\gamma_{U,k}$ is the submodularity ratio of $f$. Formally, $\gamma_{U,k}$ is defined as the minimum ratio of the marginal gain of a set to the marginal gain of its individual elements:
\begin{equation}
    \gamma_{U,k} = \min_{L \subseteq U, \, A: |A| \le k, \, L \cap A = \emptyset} \frac{\sum_{x \in A} \left( f(L \cup \{x\}) - f(L) \right)}{f(L \cup A) - f(L)}.
\end{equation}
The ratio is taken over pairs $(L,A)$ such that $f(L\cup A)>f(L)$; if the denominator is zero, the ratio is defined as $1$.
\end{lemma}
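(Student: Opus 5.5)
This is the Das--Kempe weak-submodularity argument. The plan is to track the gap between the optimum and the greedy value round by round and show it shrinks by a factor $(1-\gamma/k)$ each time.

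\textbf{Setup.} Write $\gamma=\gamma_{U,k}$. Let $S_i$ be the greedy set after $i$ steps, with $S_0=\emptyset$. Let $S^\star$ be an optimal set with $|S^\star|\le k$. The greedy algorithm runs for $k$ steps.

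\textbf{Key inequality.} Fix a step $i$ and put $L=S_i$ and $A=S^\star\setminus S_i$, so $|A|\le k$ and $L\cap A=\emptyset$.

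1. By monotonicity,
\[
f(L\cup A)=f(S_i\cup S^\star)\ge \mathrm{OPT}.
\]

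2. If $f(L\cup A)>f(L)$, the definition of the submodularity ratio gives
\[
\sum_{x\in A}\bigl(f(L\cup\{x\})-f(L)\bigr)\ \ge\ \gamma\bigl(f(L\cup A)-f(L)\bigr)\ \ge\ \gamma\bigl(\mathrm{OPT}-f(S_i)\bigr).
\]
If instead $f(L\cup A)=f(L)$, then $f(S_i)\ge \mathrm{OPT}$. By monotonicity the bound then holds trivially for all later sets, so we may ignore this case.

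3. The greedy choice maximizes the single-element marginal gain. So the gain at this step is at least the average over $A$, and since $|A|\le k$,
\[
f(S_{i+1})-f(S_i)\ \ge\ \max_{x\in A}\bigl(f(S_i\cup\{x\})-f(S_i)\bigr)\ \ge\ \frac{\gamma}{k}\bigl(\mathrm{OPT}-f(S_i)\bigr).
\]

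\textbf{Recursion.} Let $D_i=\mathrm{OPT}-f(S_i)$. The key inequality says $D_{i+1}\le (1-\gamma/k)D_i$. Since $f(\emptyset)=0$, we have $D_0=\mathrm{OPT}$. Unrolling and using $1-x\le e^{-x}$,
\[
D_k\le (1-\gamma/k)^k\,\mathrm{OPT}\le e^{-\gamma}\,\mathrm{OPT}.
\]
Therefore $f(S_k)\ge (1-e^{-\gamma})\,\mathrm{OPT}$.

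\textbf{Expected obstacles.} The argument is mostly bookkeeping, but two points need care.

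\emph{Where $\gamma$ is evaluated.} The lemma defines $\gamma_{U,k}$ as a minimum over all $L\subseteq U$. That is at least as strong as what step 2 needs, because we only use $L=S_i$. One should still check that $\gamma\le1$ is not required: if $\gamma>1$, the recursion still gives the bound, because $D_i\ge0$ is preserved whenever $f(S_i)\le \mathrm{OPT}$. Once $f(S_i)$ exceeds $\mathrm{OPT}$, the conclusion holds trivially by monotonicity.

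\emph{Zero-denominator convention.} The lemma sets the ratio to $1$ when the denominator is zero. This never interferes with step 2, because that case is exactly the one where $f(S_i)\ge\mathrm{OPT}$ and the bound is already established.
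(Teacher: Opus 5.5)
Your proposal is correct and is the Das--Kempe gap-recursion argument, $D_{i+1}\le(1-\gamma/k)D_i$ with $L=S_i$ and $A=S^\star\setminus S_i$; the paper proves this lemma only by citing Das and Kempe, and that is the argument you have written out. One simplification: any singleton $A$ gives ratio $1$ in the definition, and monotonicity makes every ratio nonnegative, so $0\le\gamma_{U,k}\le 1\le k$ and $1-\gamma/k\in[0,1]$ automatically; your separate discussion of the case $\gamma>1$ is therefore unnecessary.
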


\begin{proof}
This is the standard approximation guarantee for greedy maximization of a nonnegative monotone set function with submodularity ratio $\gamma_{U,k}$. Proof can be found in \citet{das2018approximate}.
\end{proof}

\noindent While Lemma~\ref{lemma1} offers a general guarantee, the ratio $\gamma_{\mathcal V,k}$ is typically intractable. We further provide a concrete bound for our Mahalanobis-Ensemble Energy ($\MEE$) by considering the specific objective
\[
    f(S)=\MEE(S)=\boldsymbol{p}_S^\top\boldsymbol{K}_S^{-1}\boldsymbol{p}_S.
\]
The following lemma bounds $\gamma_{\mathcal V,k}$ via the restricted minimum eigenvalue of the kernel matrix $\boldsymbol{K}$.

\begin{lemma}[Schur Complement Preserves the Minimum Eigenvalue]
\label{lem:A2}
Let $\boldsymbol{K}\in\mathbb R^{n\times n}$ be a symmetric positive definite matrix. Write

$$\boldsymbol{K}=
\begin{pmatrix}
\boldsymbol{K}_{11} & \boldsymbol{s}\\
\boldsymbol{s}^\top & K_{nn}
\end{pmatrix},
\quad
\boldsymbol{K}_{11}\in\mathbb R^{(n-1)\times(n-1)},\;
\boldsymbol{s}\in\mathbb R^{(n-1)\times 1},\;
K_{nn}>0.$$

Define the Schur complement

$$\boldsymbol{K}' = \boldsymbol{K}_{11}-\frac{1}{K_{nn}}\boldsymbol{s}\boldsymbol{s}^\top.$$

Then

$$\lambda_{\min}(\boldsymbol{K})\leq \lambda_{\min}(\boldsymbol{K}').$$

\end{lemma}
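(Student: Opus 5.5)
The approach is to reduce the claim to a variational statement about $\boldsymbol{K}^{-1}$. The key fact is that the Schur complement $\boldsymbol{K}'=\boldsymbol{K}_{11}-\boldsymbol{s}\boldsymbol{s}^\top/K_{nn}$ is the inverse of the leading $(n-1)\times(n-1)$ block of $\boldsymbol{K}^{-1}$. So I first verify, via the block inversion formula, that
\[
\boldsymbol{K}^{-1}=\begin{pmatrix} (\boldsymbol{K}')^{-1} & \ast\\ \ast & \ast\end{pmatrix}.
\]
This holds because the $(1,1)$ block of the inverse of a $2\times 2$ block matrix with invertible $(2,2)$ entry $K_{nn}$ equals the inverse of the Schur complement of that entry. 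Positive definiteness of $\boldsymbol{K}$ also gives $\boldsymbol{K}'\succ 0$, so all inverses exist.

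The second step is to pass to the largest eigenvalue of the inverse. Since $\lambda_{\min}(\boldsymbol{K}')=1/\lambda_{\max}((\boldsymbol{K}')^{-1})$, it suffices to show
\[
\lambda_{\max}\big((\boldsymbol{K}')^{-1}\big)\le\lambda_{\max}(\boldsymbol{K}^{-1}).
\]
This is Cauchy interlacing for a principal submatrix. To see it directly from Rayleigh quotients, take any unit $\boldsymbol{x}\in\mathbb R^{n-1}$ and pad it with a zero to get $\tilde{\boldsymbol{x}}=(\boldsymbol{x},0)$. Then
\[
\boldsymbol{x}^\top(\boldsymbol{K}')^{-1}\boldsymbol{x}=\tilde{\boldsymbol{x}}^\top\boldsymbol{K}^{-1}\tilde{\boldsymbol{x}}\le\lambda_{\max}(\boldsymbol{K}^{-1}).
\]
Maximizing over $\boldsymbol{x}$ gives the inequality, and inverting yields $\lambda_{\min}(\boldsymbol{K})=1/\lambda_{\max}(\boldsymbol{K}^{-1})\le\lambda_{\min}(\boldsymbol{K}')$.

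There is an alternative route that avoids inverses. Take the unit eigenvector $\boldsymbol{x}$ of $\boldsymbol{K}'$ for $\lambda_{\min}(\boldsymbol{K}')$, and set $y=-\boldsymbol{s}^\top\boldsymbol{x}/K_{nn}$. Completing the square gives
\[
(\boldsymbol{x},y)^\top\boldsymbol{K}(\boldsymbol{x},y)=\boldsymbol{x}^\top\boldsymbol{K}'\boldsymbol{x}=\lambda_{\min}(\boldsymbol{K}').
\]
The left side is at least $\lambda_{\min}(\boldsymbol{K})(1+y^2)\ge\lambda_{\min}(\boldsymbol{K})$, which again proves the claim.

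I expect no real obstacle here. The only step needing care is the block-inverse identity, or equivalently the completing-the-square computation. I would present the completing-the-square argument as the main proof, since it is self-contained and uses only positive definiteness and $K_{nn}>0$.
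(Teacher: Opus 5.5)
Your main completing-the-square argument is correct and is the paper's proof: the paper extends the bottom eigenvector $\boldsymbol{e}'$ of $\boldsymbol{K}'$ by $e_n=-\boldsymbol{s}^\top\boldsymbol{e}'/K_{nn}$, observes $\boldsymbol{e}^\top\boldsymbol{K}\boldsymbol{e}=\lambda_{\min}(\boldsymbol{K}')\|\boldsymbol{e}'\|_2^2$, and bounds the Rayleigh quotient of $\boldsymbol{K}$, just as you do. Your alternative route, using $(\boldsymbol{K}^{-1})_{11}=(\boldsymbol{K}')^{-1}$ and interlacing for $\boldsymbol{K}^{-1}$, is also valid.
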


\begin{proof}
Let $\lambda_1'=\lambda_{\min}(\boldsymbol{K}')$, and choose a nonzero eigenvector $\boldsymbol{e}'\in\mathbb R^{n-1}$ such that $\boldsymbol{K}'\boldsymbol{e}'=\lambda_1'\boldsymbol{e}'$. We construct $\boldsymbol{e}=\begin{pmatrix} \boldsymbol{e}' \\ e_n \end{pmatrix}$, where $e_n=-\frac{1}{K_{nn}}\boldsymbol{s}^\top \boldsymbol{e}'$. Using the block form of $\boldsymbol{K}$, we have:

$$\boldsymbol{K}\boldsymbol{e} = \begin{pmatrix} \boldsymbol{K}_{11}\boldsymbol{e}'+\boldsymbol{s}e_n \\ \boldsymbol{s}^\top \boldsymbol{e}'+K_{nn}e_n \end{pmatrix}.$$

By the definition of $e_n$, the lower block becomes $\boldsymbol{s}^\top \boldsymbol{e}'+K_{nn}e_n=0$. \
For the upper block, we have $\boldsymbol{K}_{11}\boldsymbol{e}'+\boldsymbol{s}e_n = \boldsymbol{K}_{11}\boldsymbol{e}'-\frac{1}{K_{nn}}\boldsymbol{s}\boldsymbol{s}^\top \boldsymbol{e}' = \left(\boldsymbol{K}_{11}-\frac{1}{K_{nn}}\boldsymbol{s}\boldsymbol{s}^\top\right)\boldsymbol{e}' = \boldsymbol{K}'\boldsymbol{e}' = \lambda_1'\boldsymbol{e}'$.
Therefore, $\boldsymbol{K}\boldsymbol{e} = \begin{pmatrix} \lambda_1'\boldsymbol{e}' \\ 0 \end{pmatrix}$.

By the Rayleigh quotient, we know $\lambda_{\min}(\boldsymbol{K}) = \min_{\boldsymbol{x}\neq 0}\frac{\boldsymbol{x}^\top \boldsymbol{K}\boldsymbol{x}}{\boldsymbol{x}^\top \boldsymbol{x}} \leq \frac{\boldsymbol{e}^\top \boldsymbol{K}\boldsymbol{e}}{\boldsymbol{e}^\top \boldsymbol{e}}$. \
Since $\boldsymbol{e}^\top \boldsymbol{K}\boldsymbol{e}=\lambda_1'\|\boldsymbol{e}'\|_2^2$ and $\boldsymbol{e}^\top \boldsymbol{e}=\|\boldsymbol{e}'\|_2^2+e_n^2\geq \|\boldsymbol{e}'\|_2^2$, we can bound the quotient as follows:

$$\lambda_{\min}(\boldsymbol{K}) \leq \frac{\boldsymbol{e}^\top \boldsymbol{K}\boldsymbol{e}}{\boldsymbol{e}^\top \boldsymbol{e}} = \frac{\lambda_1'\|\boldsymbol{e}'\|_2^2}{\|\boldsymbol{e}'\|_2^2+e_n^2} \leq \lambda_1'.$$

Since $\boldsymbol{K}'\succ 0$, we have $\lambda_1'>0$, which implies $\lambda_1' = \lambda_{\min}(\boldsymbol{K}')$. This proves the claim.
\end{proof}

\begin{lemma}[Spectral Bound on Submodularity Ratio]
\label{lem:A3}
Assume that $\boldsymbol{K}\succ0$ is a correlation matrix. For $f(S)=\boldsymbol{p}_S^\top\boldsymbol{K}_S^{-1} \boldsymbol{p}_S$ and disjoint sets $L, A$, let $\boldsymbol{K}_{L \cup A}$ be partitioned naturally into blocks $\boldsymbol{K}_L, \boldsymbol{K}_A, \boldsymbol{K}_{LA}, \boldsymbol{K}_{AL}$. We define the residual statistics $\tilde{\boldsymbol{p}}$ and $\tilde{\boldsymbol{K}}$ (Schur complement) as:

$$    \tilde{\boldsymbol{p}} = \boldsymbol{p}_A-\boldsymbol{K}_{A L} \boldsymbol{K}_L^{-1} \boldsymbol{p}_L, \quad 
    \tilde{\boldsymbol{K}} = \boldsymbol{K}_A-\boldsymbol{K}_{A L} \boldsymbol{K}_L^{-1} \boldsymbol{K}_{L A}.$$

The submodularity ratio $\gamma_{\mathcal{V}, k}$, which involves minimizing the term $\frac{\tilde{\boldsymbol{p}}^\top[\operatorname{diag}(\tilde{\boldsymbol{K}})]^{-1} \tilde{\boldsymbol{p}}}{\tilde{\boldsymbol{p}}^\top \tilde{\boldsymbol{K}}^{-1} \tilde{\boldsymbol{p}}}$, is bounded from below by the eigenvalues of $\boldsymbol{K}$:

$$        \gamma_{\mathcal{V}, k} \geq \min_{\substack{L\subseteq\mathcal V,\; A\subseteq\mathcal V\setminus L\\ |A|\le k}} \lambda_{\min }(\boldsymbol{K}, |L \cup A|) \geq \lambda_{\min }(\boldsymbol{K}, N),$$

where $\lambda_{\min}(\boldsymbol{K}, k) \triangleq \min _{S:|S|=k} \lambda_{\min}\left(\boldsymbol{K}_S\right)$ denotes the smallest eigenvalue among all $k \times k$ principal submatrices of $\boldsymbol{K}$, and $N = |\mathcal{V}|$.
\end{lemma}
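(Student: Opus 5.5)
We bound the submodularity ratio pair by pair: for each fixed disjoint $(L,A)$ with $|A|\le k$, we express numerator and denominator through the residual quantities $\tilde{\boldsymbol p}$ and $\tilde{\boldsymbol K}$. Then we reduce the ratio to a Rayleigh-quotient comparison between $[\operatorname{diag}(\tilde{\boldsymbol K})]^{-1}$ and $\tilde{\boldsymbol K}^{-1}$. Finally, we use Lemma~\ref{lem:A2} to transfer the resulting spectral bound from $\tilde{\boldsymbol K}$ back to a principal submatrix of $\boldsymbol K$.

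\textbf{Step 1: rewrite numerator and denominator.} By block inversion of $\boldsymbol K_{L\cup A}$ (the same Schur-complement identity used in the proof of Theorem~\ref{thm:mes_saturation}),
\[
f(L\cup A)-f(L)=\tilde{\boldsymbol p}^\top\tilde{\boldsymbol K}^{-1}\tilde{\boldsymbol p}.
\]
Specializing to $A=\{x\}$ gives $f(L\cup\{x\})-f(L)=\tilde p_x^2/\tilde K_{xx}$. Hence
\[
\sum_{x\in A}\bigl(f(L\cup\{x\})-f(L)\bigr)=\tilde{\boldsymbol p}^\top[\operatorname{diag}(\tilde{\boldsymbol K})]^{-1}\tilde{\boldsymbol p},
\]
which justifies the ratio stated in the lemma. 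If $\tilde{\boldsymbol p}=0$, the ratio is $1$ by convention. Since $\lambda_{\min}$ of a principal submatrix of a correlation matrix is at most its average diagonal entry, which is $1$, the bound holds trivially in that case.

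\textbf{Step 2: spectral comparison.} Because $\boldsymbol K$ is a correlation matrix, $K_{xx}=1$. Each $\tilde K_{xx}$ is a diagonal of a Schur complement, so $\tilde K_{xx}=1-\boldsymbol K_{xL}\boldsymbol K_L^{-1}\boldsymbol K_{Lx}\le1$. Therefore
\[
\tilde{\boldsymbol p}^\top[\operatorname{diag}(\tilde{\boldsymbol K})]^{-1}\tilde{\boldsymbol p}\ge\|\tilde{\boldsymbol p}\|_2^2.
\]
For the denominator, $\tilde{\boldsymbol p}^\top\tilde{\boldsymbol K}^{-1}\tilde{\boldsymbol p}\le\|\tilde{\boldsymbol p}\|_2^2/\lambda_{\min}(\tilde{\boldsymbol K})$. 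Dividing the two bounds gives
\[
\text{ratio}\ge\lambda_{\min}(\tilde{\boldsymbol K}).
\]

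\textbf{Step 3: return to $\boldsymbol K$.} We show $\lambda_{\min}(\tilde{\boldsymbol K})\ge\lambda_{\min}(\boldsymbol K_{L\cup A})$ by iterating Lemma~\ref{lem:A2}. Eliminate the indices of $L$ one at a time, reordering so that the eliminated index sits in the last position. Each elimination produces a Schur complement that is again symmetric positive definite and does not decrease $\lambda_{\min}$. By the quotient property of Schur complements, the sequence of single-index eliminations composes to $\tilde{\boldsymbol K}$, giving $\lambda_{\min}(\tilde{\boldsymbol K})\ge\lambda_{\min}(\boldsymbol K_{L\cup A})\ge\lambda_{\min}(\boldsymbol K,|L\cup A|)$.

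Taking the minimum over all admissible $(L,A)$ yields the first inequality of the lemma.

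\textbf{Step 4: the second inequality.} We need $\lambda_{\min}(\boldsymbol K,m)\ge\lambda_{\min}(\boldsymbol K,N)=\lambda_{\min}(\boldsymbol K_{\mathcal V})$ for every $m=|L\cup A|\le N$. This follows from Cauchy interlacing: any $m\times m$ principal submatrix has smallest eigenvalue at least $\lambda_{\min}$ of the full matrix $\boldsymbol K_{\mathcal V}$.

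\textbf{Main obstacle.} Step 3 is the delicate point. The single-step Lemma~\ref{lem:A2} must be chained through block elimination, which requires checking that the successive rank-one Schur complements coincide with the block Schur complement $\tilde{\boldsymbol K}$ (the Crabtree--Haynsworth quotient formula). It also requires checking that positive definiteness is preserved at each intermediate step, so that Lemma~\ref{lem:A2} applies again. Everything else is routine.
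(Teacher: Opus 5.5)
Your proof is correct and follows essentially the same route as the paper. Both arguments use the block-inverse identity for the marginal gains, a Rayleigh-quotient bound giving ratio $\ge\lambda_{\min}$ of the residual matrix, iterated Lemma~\ref{lem:A2} to return to $\boldsymbol{K}_{L\cup A}$, and Cauchy interlacing for the second inequality. The one difference is that you bound the numerator directly via $\tilde K_{xx}\le 1$, whereas the paper first normalizes to $\tilde{\boldsymbol K}_\rho=\boldsymbol D^{-1/2}\tilde{\boldsymbol K}\boldsymbol D^{-1/2}$. Your version makes explicit the fact $\operatorname{diag}(\tilde{\boldsymbol K})\preceq\boldsymbol I$, which the paper's step $\lambda_{\min}(\tilde{\boldsymbol K})\le\lambda_{\min}(\tilde{\boldsymbol K}_\rho)$ relies on without comment.
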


\begin{proof}
Consider disjoint $L$ and $A$. Using the block inverse formula for $\boldsymbol{K}_{L\cup A}^{-1}$, one obtains the decomposition

$$f(L\cup A)=\boldsymbol{p}_L^\top \boldsymbol{K}_L^{-1} \boldsymbol{p}_L \;+\;\tilde{\boldsymbol{p}}^\top \tilde{\boldsymbol{K}}^{-1}\tilde{\boldsymbol{p}},$$

hence

$$f(L\cup A)-f(L)=\tilde{\boldsymbol{p}}^\top \tilde{\boldsymbol{K}}^{-1}\tilde{\boldsymbol{p}}.$$

For a singleton $a\in A$, the same calculation with $|A|=1$ gives

$$f(L\cup\{a\})-f(L)=\frac{\tilde{p}_a^2}{\tilde{K}_{aa}}.$$

Therefore, for any $L\subseteq \mathcal{V}$ and $A$ with $|A|\le k$ and $A\cap L=\emptyset$, the ratio appearing in the definition of the submodularity ratio becomes

$$\frac{\sum_{a\in A}\big(f(L\cup\{a\})-f(L)\big)}{f(L\cup A)-f(L)}
=\frac{\tilde{\boldsymbol{p}}^\top\big[\mathrm{diag}(\tilde{\boldsymbol{K}})\big]^{-1}\tilde{\boldsymbol{p}}}
{\tilde{\boldsymbol{p}}^\top \tilde{\boldsymbol{K}}^{-1}\tilde{\boldsymbol{p}}}.$$

Then, let $\boldsymbol{D}=\mathrm{diag}(\tilde{\boldsymbol{K}})$ and define the correlation matrix $\tilde{\boldsymbol{K}}_{\rho}=\boldsymbol{D}^{-1/2}\tilde{\boldsymbol{K}}\boldsymbol{D}^{-1/2}$. Let $\boldsymbol{v}=\boldsymbol{D}^{-1/2}\tilde{\boldsymbol{p}}$. Then

$$\tilde{\boldsymbol{p}}^\top\boldsymbol{D}^{-1}\tilde{\boldsymbol{p}}=\|\boldsymbol{v}\|_2^2,\qquad
\tilde{\boldsymbol{p}}^\top \tilde{\boldsymbol{K}}^{-1}\tilde{\boldsymbol{p}} = \boldsymbol{v}^\top \tilde{\boldsymbol{K}}_{\rho}^{-1} \boldsymbol{v}.$$

Hence the ratio equals $\frac{\boldsymbol{v}^\top \boldsymbol{v}}{\boldsymbol{v}^\top \tilde{\boldsymbol{K}}_{\rho}^{-1}\boldsymbol{v}}$. By Rayleigh--Ritz, $\boldsymbol{v}^\top \tilde{\boldsymbol{K}}_{\rho}^{-1}\boldsymbol{v} \le \lambda_{\max}(\tilde{\boldsymbol{K}}_{\rho}^{-1})\,\boldsymbol{v}^\top \boldsymbol{v} =\frac{1}{\lambda_{\min}(\tilde{\boldsymbol{K}}_{\rho})}\boldsymbol{v}^\top \boldsymbol{v}$, so

$$\frac{\tilde{\boldsymbol{p}}^\top\boldsymbol{D}^{-1}\tilde{\boldsymbol{p}}}{\tilde{\boldsymbol{p}}^\top \tilde{\boldsymbol{K}}^{-1}\tilde{\boldsymbol{p}}} \ge \lambda_{\min}(\tilde{\boldsymbol{K}}_{\rho}).$$

We can eliminate the elements of $L$ one by one via residualization; each elimination replaces the current covariance by the covariance of residuals. By Lemma~\ref{lem:A2}, each such residualization step cannot decrease the smallest eigenvalue. After eliminating all of $L$, we obtain the Schur complement $\tilde{\boldsymbol{K}}$ corresponding to conditioning on $L$, and thus

$$\lambda_{\min}(\boldsymbol{K}_{L\cup A}) \le \lambda_{\min}(\tilde{\boldsymbol{K}}).$$

Finally, normalizing $\tilde{\boldsymbol{K}}$ to unit variance gives $\tilde{\boldsymbol{K}}_{\rho}$, and $\lambda_{\min}(\tilde{\boldsymbol{K}})\le \lambda_{\min}(\tilde{\boldsymbol{K}}_{\rho})$. Combining the last two displays:

$$\lambda_{\min}(\tilde{\boldsymbol{K}}_{\rho}) \ge \lambda_{\min}(\boldsymbol{K}_{L\cup A}) \ge \lambda_{\min}(\boldsymbol{K},|L\cup A|).$$

By definition of $\gamma_{\mathcal{V},k}$, we conclude

$$\gamma_{\mathcal{V},k} \ge \min_{\substack{L\subseteq\mathcal V,\; A\subseteq\mathcal V\setminus L\\ |A|\le k}} \lambda_{\min}(\boldsymbol{K},|L\cup A|).$$

Since $\boldsymbol{K}_{L\cup A}$ is a principal submatrix of $\boldsymbol{K}$, Cauchy's interlacing theorem gives

$$\lambda_{\min}(\boldsymbol{K}_{L\cup A}) \ge \lambda_{\min}(\boldsymbol{K}).$$

In our notation, $\lambda_{\min}(\boldsymbol{K})=\lambda_{\min}(\boldsymbol{K},N)$, which naturally completes the proof.
\end{proof}

\begin{lemma}[Approximation Bound for Greedy $\MEE$]
\label{lemma:mee_greedy_bound_final}
Assume that $\boldsymbol{K}\succ0$ is a correlation matrix defined on ground set $\mathcal{V}$ ($|\mathcal{V}|=N$). Let $S_k$ be the greedy set after $k$ steps. Then the approximation bound holds:

$$    f(S_k) \geq \left(1-\exp\{-\lambda_{\min}(\boldsymbol{K},N)\}\right) \max_{\substack{S\subseteq\mathcal{V}\\ |S|\leq k}} f(S).$$

\end{lemma}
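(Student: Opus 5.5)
The plan is to obtain Lemma~\ref{lemma:mee_greedy_bound_final} by combining the generic greedy guarantee of Lemma~\ref{lemma1} with the spectral lower bound on the submodularity ratio from Lemma~\ref{lem:A3}. We apply both with $f(S)=\MEE(S)=\boldsymbol{p}_S^\top\boldsymbol{K}_S^{-1}\boldsymbol{p}_S$ and $U=\mathcal V$.

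\textbf{Step 1: $f$ satisfies the hypotheses of Lemma~\ref{lemma1}.}
\begin{itemize}
\item \emph{Normalized:} with the convention $f(\varnothing)=0$ (empty quadratic form), $f$ is normalized.
\item \emph{Nonnegative:} $f\ge 0$ because $\boldsymbol{K}_S^{-1}\succ0$, since every principal submatrix of $\boldsymbol{K}\succ0$ is positive definite.
\item \emph{Monotone:} use the Schur-complement decomposition already written in the proof of Lemma~\ref{lem:A3}, namely $f(L\cup A)-f(L)=\tilde{\boldsymbol{p}}^\top\tilde{\boldsymbol{K}}^{-1}\tilde{\boldsymbol{p}}$. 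Here $\tilde{\boldsymbol{K}}$ is a Schur complement of a positive definite matrix, hence positive definite, so the increment is nonnegative.
\end{itemize}

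\textbf{Step 2: the greedy rule matches the one in Lemma~\ref{lemma1}.} Algorithm~\ref{alg:greedy_mes_decoding}, ignoring the stopping test, adds at each step the element maximizing $c_j=\|\boldsymbol z\|_2^2+[r_j(p_j-\boldsymbol\alpha_j^\top\boldsymbol z)]^2$.
\begin{itemize}
\item By the incremental Cholesky identity, $c_j=f(\mathcal S\cup\{j\})$. Equivalently, the marginal gain equals $\tilde p_j^2/\tilde K_{jj}$, as in Lemma~\ref{lem:A3}.
\item So each step is the standard greedy step that maximizes marginal gain, and $S_k$ is exactly the $k$-step greedy set.
\item The first step picks $\arg\max_i p_i$. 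Because $K_{ii}=1$ (correlation matrix), $f(\{i\})=p_i^2$, so this is also the greedy choice.
\end{itemize}

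\textbf{Step 3: conclude.} Lemma~\ref{lemma1} with budget $k$ gives
\[
f(S_k)\ge(1-e^{-\gamma_{\mathcal V,k}})\max_{|S|\le k}f(S).
\]
Lemma~\ref{lem:A3} gives $\gamma_{\mathcal V,k}\ge\lambda_{\min}(\boldsymbol K,N)$. Since $x\mapsto 1-e^{-x}$ is increasing and the optimum is nonnegative, we may replace $\gamma_{\mathcal V,k}$ by this lower bound, which yields the stated inequality.

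\textbf{Expected obstacle.} The main point to check is that the submodularity ratio in Lemma~\ref{lemma1} is the same quantity that Lemma~\ref{lem:A3} bounds. Two things need verifying:
\begin{itemize}
\item \emph{Range of $L$.} Lemma~\ref{lemma1} as cited from \citet{das2018approximate} only needs $L$ to range over greedy intermediate sets, while Lemma~\ref{lem:A3} minimizes over all $L$. Bounding a larger family only strengthens the inequality, so this direction is fine.
\item \emph{Orientation of the ratio.} Lemma~\ref{lem:A3} uses the orientation "sum of singleton gains over joint gain", with the denominator-zero convention. I would confirm it matches the definition in Lemma~\ref{lemma1} and note explicitly that $\lambda_{\min}(\boldsymbol K,N)\le1$ for a correlation matrix, so the factor lies in $(0,1-e^{-1}]$.
\end{itemize}
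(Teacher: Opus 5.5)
Your proposal is correct and follows the paper's proof: you check that $\MEE$ is normalized, nonnegative and monotone, invoke Lemma~\ref{lemma1}, lower-bound $\gamma_{\mathcal V,k}$ by $\lambda_{\min}(\boldsymbol K,N)$ via Lemma~\ref{lem:A3}, and substitute using that $1-e^{-x}$ is increasing. Your extra checks fill in details the paper only asserts: monotonicity via the Schur-complement increment, and the fact that the score $c_j$ in Algorithm~\ref{alg:greedy_mes_decoding} equals $f(\mathcal S\cup\{j\})$, so its path (ignoring the stopping test) is the standard greedy path.
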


\begin{proof}
Let $\mathrm{OPT} \triangleq \max_{S\subseteq \mathcal{V},\,|S|\le k} \MEE(S)$.

First, since the Mahalanobis-Ensemble Energy objective $\MEE(S) = \boldsymbol{p}_S^\top\boldsymbol{K}_S^{-1}\boldsymbol{p}_S$ is a normalized, nonnegative, and monotone set function, we can apply the general greedy guarantee from Lemma~\ref{lemma1}. This yields:

$$\MEE(S_k) \ge \bigl(1-e^{-\gamma_{\mathcal{V},k}}\bigr)\cdot \mathrm{OPT},$$

where $\gamma_{\mathcal{V},k}$ is the submodularity ratio of $\MEE$ on the ground set $\mathcal{V}$.

Next, we lower bound the submodularity ratio $\gamma_{\mathcal{V},k}$. By Lemma~\ref{lem:A3}, the submodularity ratio satisfies:

$$\gamma_{\mathcal V,k} \ge \min_{\substack{L\subseteq\mathcal V,\; A\subseteq\mathcal V\setminus L\\ |A|\le k}} \lambda_{\min}(\boldsymbol{K},|L\cup A|).$$

Since $L \cup A \subseteq \mathcal{V}$, we have $|L \cup A| \le N$. By the interlacing theorem, every principal submatrix of $\boldsymbol{K}$ has a minimum eigenvalue at least $\lambda_{\min}(\boldsymbol{K},N)$. Hence:

$$\gamma_{\mathcal V,k} \ge \lambda_{\min}(\boldsymbol{K},N).$$

Substituting this spectral lower bound back into the greedy approximation guarantee gives:

$$\MEE(S_k) \ge \bigl(1-e^{-\lambda_{\min}(\boldsymbol{K},N)}\bigr)\cdot \mathrm{OPT}.$$

This completes the proof.
\end{proof}

\begin{proof}[\textbf{Proof of Theorem ~\ref{thm:mes_stopping_approx}}]
Let $\MEE(S)=\boldsymbol{p}_S^\top\boldsymbol{K}_S^{-1}\boldsymbol{p}_S$ and $c_\lambda=1+\lambda H_2^T(\boldsymbol{p})$. The objective can be written as $\MES_{\lambda}(S) = \frac{\MEE(S)}{c_\lambda^{|S|}}$. Define the greedy approximation factor as $\alpha_N = 1-\exp\{-\lambda_{\min}(\boldsymbol{K},N)\}$.

By Lemma~\ref{lemma:mee_greedy_bound_final}, for every $k=1,\ldots,N$, the greedy set $S_k$ satisfies:
\begin{equation}
\MEE(S_k) \geq \alpha_N \max_{\substack{S\subseteq\mathcal{V}\ |S|\leq k}} \MEE(S).
\label{eq:mee_greedy_step}
\end{equation}
Fix any $k\in\{1,\ldots,N\}$. Dividing both sides of \eqref{eq:mee_greedy_step} by $c_\lambda^k$ gives:

$$\begin{aligned}
    \MES_{\lambda}(S_k) 
    = \frac{\MEE(S_k)}{c_\lambda^k}
    &\geq \alpha_N \frac{\max_{S\subseteq\mathcal{V}, |S|\leq k} \MEE(S)}{c_\lambda^k} \\
    &\geq \alpha_N \frac{\max_{S\subseteq\mathcal{V}, |S|=k} \MEE(S)}{c_\lambda^k} \\
    &= \alpha_N \max_{\substack{S\subseteq\mathcal{V}\\ |S|=k}} \frac{\MEE(S)}{c_\lambda^{|S|}} 
    = \alpha_N \max_{\substack{S\subseteq\mathcal{V}\\ |S|=k}} \MES_{\lambda}(S).
\end{aligned}$$

Taking the maximum over all subset sizes $k=1,\ldots,N$ on both sides, we obtain the global bound for the greedy trajectory:
\begin{equation}
\max_{1\leq k\leq N} \MES_{\lambda}(S_k)
\geq \alpha_N \max_{1\leq k\leq N} \max_{\substack{S\subseteq\mathcal{V}\ |S|=k}} \MES_{\lambda}(S).
\label{eq:mes_global_max}
\end{equation}

By Theorem~\ref{thm:mes_saturation}, the greedy saturation stopping point $\tau$ achieves the exact maximum of the entire greedy sequence, meaning $\MES_{\tau}^{g} = \max_{0\leq r\leq N}\MES_{r}^{g}$. Since $\MES_r^g=\MES_{\lambda}(S_r)$, we have:

$$    \MES_{\tau}^{g} \geq \max_{1\leq k\leq N} \MES_{\lambda}(S_k).$$

Combining this with \eqref{eq:mes_global_max} and substituting the definition of $\alpha_N$, we arrive at our final conclusion:

$$    \MES_{\tau}^{g} \geq \left(1-\exp\{-\lambda_{\min}(\boldsymbol{K},N)\}\right) \max_{\emptyset\neq S\subseteq\mathcal{V}} \MES_{\lambda}(S).$$

This completes the proof.
\end{proof}

\clearpage
\twocolumn
\section{Additional Experiments}
\label{app:additional_experiments}

This section provides repeated evaluations, controlled ablations, diagnostic analyses, and efficiency measurements.
All experiments in this work were conducted on a single NVIDIA GeForce RTX 4090 GPU.
Unless stated otherwise, repeated generation results use three shared seeds and report the mean $\pm$ sample standard deviation.

\subsection{Additional Results for Reasoning Benchmarks}
\label{app:repeated_reasoning}

Tables~\ref{tab:gsm8k_repeated} and~\ref{tab:gpqa_repeated} repeat the reasoning evaluation over three seeds and report CraEG combined with $p$-less.
This composition follows CraEG's plug-in role as a post-softmax reweighting module rather than a standalone support-selection rule, while the tables use \emph{CraEG} as the concise method name.

\begin{table*}[t]
\centering
\setlength{\tabcolsep}{2.5pt}
\resizebox{\textwidth}{!}{
\begin{tabular}{lccc|ccc|ccc|c}
\toprule
& \multicolumn{3}{c|}{Qwen3-4B-Instruct}
& \multicolumn{3}{c|}{Phi-4-mini-Instruct}
& \multicolumn{3}{c|}{Mistral-7B-Instruct}
& \\
\cmidrule(lr){2-4}\cmidrule(lr){5-7}\cmidrule(lr){8-10}
Method & $T=1.0$ & $T=1.5$ & $T=2.0$
& $T=1.0$ & $T=1.5$ & $T=2.0$
& $T=1.0$ & $T=1.5$ & $T=2.0$ & Avg. \\
\midrule
$p$-less
& $78.82_{\pm 0.24}$ & $72.53_{\pm 0.50}$ & $64.47_{\pm 1.00}$
& $82.99_{\pm 0.76}$ & $82.99_{\pm 0.39}$ & $71.14_{\pm 0.99}$
& $53.53_{\pm 0.73}$ & $51.96_{\pm 0.74}$ & $46.07_{\pm 1.30}$ & 67.17 \\
Top-$W$
& $77.63_{\pm 0.35}$ & $76.65_{\pm 0.23}$ & $75.54_{\pm 0.72}$
& $83.24_{\pm 0.13}$ & $83.52_{\pm 0.23}$ & $82.99_{\pm 0.18}$
& $53.07_{\pm 0.40}$ & $53.50_{\pm 1.05}$ & $52.44_{\pm 1.26}$ & 70.95 \\
CraEG
& $72.23_{\pm 1.71}$ & $65.07_{\pm 1.00}$ & $58.12_{\pm 1.32}$
& $83.52_{\pm 0.55}$ & $81.91_{\pm 1.07}$ & $65.38_{\pm 1.26}$
& $52.72_{\pm 0.12}$ & $50.82_{\pm 0.57}$ & $40.56_{\pm 1.64}$ & 63.37 \\
\rowcolor{gray!18}\textbf{Ours}
& $\mathbf{80.04}_{\pm 0.69}$ & $\mathbf{79.56}_{\pm 0.35}$ & $\mathbf{80.04}_{\pm 0.80}$
& $\mathbf{83.90}_{\pm 0.24}$ & $\mathbf{83.95}_{\pm 0.31}$ & $82.44_{\pm 0.12}$
& $\mathbf{54.18}_{\pm 1.03}$ & $\mathbf{53.55}_{\pm 0.12}$ & $\mathbf{53.98}_{\pm 0.57}$ & \textbf{72.40} \\
\bottomrule
\end{tabular}}
\caption{GSM8K flexible-extract accuracy (\%) over three generation seeds. Subscripts report sample standard deviations, and the Avg. column averages the nine model--temperature means.}
\label{tab:gsm8k_repeated}
\end{table*}

\begin{table*}[t]
\centering
\setlength{\tabcolsep}{2.5pt}
\resizebox{\textwidth}{!}{
\begin{tabular}{lccc|ccc|ccc|c}
\toprule
& \multicolumn{3}{c|}{Qwen3-4B-Instruct}
& \multicolumn{3}{c|}{Phi-4-mini-Instruct}
& \multicolumn{3}{c|}{Mistral-7B-Instruct}
& \\
\cmidrule(lr){2-4}\cmidrule(lr){5-7}\cmidrule(lr){8-10}
Method & $T=1.0$ & $T=1.5$ & $T=2.0$
& $T=1.0$ & $T=1.5$ & $T=2.0$
& $T=1.0$ & $T=1.5$ & $T=2.0$ & Avg. \\
\midrule
$p$-less
& $35.57_{\pm 2.29}$ & $35.86_{\pm 1.57}$ & $33.85_{\pm 1.35}$
& $32.81_{\pm 1.77}$ & $32.66_{\pm 0.78}$ & $30.06_{\pm 2.79}$
& $28.42_{\pm 1.44}$ & $28.72_{\pm 1.23}$ & $27.68_{\pm 1.39}$ & 31.74 \\
Top-$W$
& $35.49_{\pm 0.45}$ & $34.90_{\pm 0.78}$ & $35.86_{\pm 0.26}$
& $31.62_{\pm 0.78}$ & $30.58_{\pm 0.77}$ & $30.65_{\pm 1.49}$
& $28.79_{\pm 1.02}$ & $29.39_{\pm 0.34}$ & $29.39_{\pm 1.49}$ & 31.85 \\
CraEG
& $35.71_{\pm 1.24}$ & $34.08_{\pm 2.17}$ & $33.93_{\pm 0.97}$
& $32.22_{\pm 1.42}$ & $32.14_{\pm 0.80}$ & $29.54_{\pm 0.68}$
& $28.65_{\pm 0.68}$ & $28.72_{\pm 2.03}$ & $28.20_{\pm 1.23}$ & 31.46 \\
\rowcolor{gray!18}\textbf{Ours}
& $\mathbf{36.68}_{\pm 0.13}$ & $\mathbf{36.76}_{\pm 0.68}$ & $\mathbf{36.61}_{\pm 0.59}$
& $\mathbf{34.30}_{\pm 0.68}$ & $\mathbf{34.45}_{\pm 0.78}$ & $\mathbf{34.08}_{\pm 0.56}$
& $\mathbf{29.09}_{\pm 1.01}$ & $\mathbf{29.84}_{\pm 0.34}$ & $29.32_{\pm 0.46}$ & \textbf{33.46} \\
\bottomrule
\end{tabular}}
\caption{GPQA accuracy (\%) over three generation seeds. Subscripts report sample standard deviations, and the Avg. column averages the nine model--temperature means.}
\label{tab:gpqa_repeated}
\end{table*}

Across the nine model--temperature settings, ME-Decoding achieves the highest average accuracy on both GSM8K and GPQA.
It leads in eight of the nine settings on each benchmark, supporting stable improvements across seeds and temperatures.

\subsection{Ablation Studies}
\label{app:kernel_controls}

\paragraph{Probability and Kernel Controls.}

We use Greedy decoding to test whether repeatedly selecting the locally most probable token is sufficient.
At $T=1.0$, ME-Decoding improves GSM8K accuracy from $61.41\%$ to $63.20\pm0.57\%$ and GPQA accuracy from $32.37\%$ to $32.66\pm1.05\%$ on Qwen2.5-1.5B.
Greedy is deterministic, whereas ME-Decoding results average three generation seeds.

We then compare the semantic kernel with two controlled alternatives.
The identity kernel $K=I$ removes pairwise geometry, reducing the MEE numerator to $\sum_{i\in S}p_i^2$.
The permuted kernel $K_{\mathrm{perm}}=P K_{\mathrm{true}}P^\top$ preserves the eigenvalues, condition number, entry multiset, and numerical scale of the semantic kernel while disrupting its alignment with token identities.
All variants use the same prompts, probabilities, generation seeds, and pruning hyperparameters.

\begin{center}
\begin{minipage}{0.96\columnwidth}
\centering
\setlength{\tabcolsep}{6pt}
\begin{tabular}{lccc}
\toprule
Dataset & True $K$ & Permuted $K$ & $K=I$ \\
\midrule
GSM8K & \textbf{62.32} & 61.15 & 60.85 \\
GPQA & \textbf{32.64} & 32.09 & 29.81 \\
\bottomrule
\end{tabular}
\captionof{table}{Accuracy (\%) averaged over $T\in\{1.0,1.5,2.0\}$ in the Qwen2.5-1.5B kernel-control experiment. Both controls reduce accuracy relative to the correctly aligned semantic kernel.}
\label{tab:kernel_controls_average}
\end{minipage}
\end{center}

The correctly aligned semantic kernel achieves the highest average accuracy on both datasets, while removing or permuting the geometry reduces performance.

\paragraph{Component Ablations.}
We examine the contribution of the adaptive bandwidth $\epsilon$ and the kernel matrix $K$.
Removing the adaptive bandwidth degrades performance, especially at higher temperatures, which indicates that an appropriately scaled kernel is important for robust optimization.
Replacing $K$ with the identity matrix removes the embedding-based similarity structure and also reduces accuracy.

\begin{table}[htbp]
\centering
\footnotesize
\begin{tabular}{lcccc}
\toprule
Dataset
& $T$
& ME-Decoding
& Without $\epsilon$
& $K=I$ \\
\midrule
\multirow{3}{*}{GSM8K}
& 1.0 & 84.08 & $-2.20$ & $-0.15$ \\
& 1.5 & 84.23 & $-4.62$ & $-0.83$ \\
& 2.0 & 82.41 & $-8.19$ & $-1.36$ \\
\midrule
\multirow{3}{*}{GPQA}
& 1.0 & 34.38 & $-0.22$ & $-0.67$ \\
& 1.5 & 36.16 & $-1.56$ & $-2.90$ \\
& 2.0 & 33.93 & $-0.22$ & $-0.89$ \\
\bottomrule
\end{tabular}
\caption{Component ablations under different temperatures. The ME-Decoding column reports absolute accuracy, while the remaining columns report changes relative to ME-Decoding.}
\label{tab:ablation_results}
\end{table}
\FloatBarrier

\subsection{Empirical Validation of Theorem~\ref{thm:mes_saturation}}
\label{app:unimodality_probe}

The condition in Theorem~\ref{thm:mes_saturation} is sufficient rather than necessary for trajectory unimodality.
To examine whether the predicted behavior occurs in practice, we disable early stopping and record the complete greedy $\MES$ trajectory for a 512-trajectory probe from each benchmark using Qwen2.5-1.5B at $T=1.0$.

\begin{center}
\begin{minipage}{0.96\columnwidth}
\centering
\setlength{\tabcolsep}{7pt}
\begin{tabular}{lcc}
\toprule
Dataset & Unimodal Trajectories & Rate \\
\midrule
GSM8K & 512/512 & \textbf{100\%} \\
GPQA & 512/512 & \textbf{100\%} \\
\bottomrule
\end{tabular}
\captionof{table}{Empirical unimodality of complete greedy $\MES$ trajectories.}
\label{tab:unimodality_probe}
\end{minipage}
\end{center}

The observation supports the practical relevance of the stopping behavior without replacing the theorem's sufficient-condition analysis.
The approximation result has a separate scope and depends on restricted minimum eigenvalues of selected kernel submatrices.

\subsection{Efficiency and Complexity}
\label{app:gpu_efficiency}

We benchmark 200 GSM8K problems, generating 64 tokens per problem with $T=1.5$, batch size $1$, and $N=512$.
Each method therefore generates 12,800 tokens.
ME-Decoding uses a custom Triton kernel, and all methods use identical filtering and sampling infrastructure.

\begin{table*}[t]
\centering
\setlength{\tabcolsep}{4pt}
\resizebox{\textwidth}{!}{
\begin{tabular}{llccccc}
\toprule
Model & Method & Model Inference (s) & Decoding and Sampling (s) & Other (s) & Total (s) & ms/token \\
\midrule
\multirow{5}{*}{Qwen2.5-1.5B}
& Top-$p$ & 233.20 & 4.29 & 0.99 & 238.48 & 18.63 \\
& Min-$p$ & 233.19 & 2.17 & 0.57 & 235.93 & 18.43 \\
& $p$-less & 233.23 & 2.17 & 0.63 & 236.03 & 18.44 \\
& Top-$H$ & 233.56 & 4.68 & 1.00 & 239.25 & 18.69 \\
& \textbf{ME-Decoding} & 242.85 & 5.64 & 1.03 & 249.52 & 19.49 \\
\midrule
\multirow{5}{*}{Qwen3-4B}
& Top-$p$ & 358.20 & 2.89 & 0.50 & 361.58 & 28.25 \\
& Min-$p$ & 356.42 & 1.97 & 0.55 & 358.94 & 28.04 \\
& $p$-less & 356.63 & 1.97 & 0.59 & 359.18 & 28.06 \\
& Top-$H$ & 359.14 & 2.91 & 0.63 & 362.68 & 28.33 \\
& \textbf{ME-Decoding} & 360.29 & 3.01 & 1.15 & 364.45 & 28.47 \\
\bottomrule
\end{tabular}}
\caption{End-to-end GPU timing breakdown.}
\label{tab:gpu_timing_full}
\end{table*}
\FloatBarrier

Table~\ref{tab:gpu_timing_full} shows that ME-Decoding increases total latency by approximately $5.8\%$ on Qwen2.5-1.5B and $1.5\%$ on Qwen3-4B relative to the fastest baseline, with model inference remaining the dominant cost.

\paragraph{Detailed Complexity Analysis.}
\label{app:complexity}

Let $N$ denote the size of the candidate token pool, $d$ denote the token embedding dimension, and $\tau$ denote the number of tokens selected before the early stopping criterion is triggered.
ME-Decoding does not explicitly construct the full $N\times N$ similarity matrix.
The adaptive bandwidth can also be computed without any pairwise construction.
Since we use normalized token embeddings and $\boldsymbol{C}_{ij}=1-\boldsymbol{e}_{i}^\top \boldsymbol{e}_{j}$, we have
\[
\epsilon
=
\frac{1}{2}\sum_{i,j\in\mathcal V}\boldsymbol{p}_{i} \boldsymbol{p}_{j} \boldsymbol{C}_{ij}
=
\frac{1}{2}
\left(
1-
\left\|
\sum_{i\in\mathcal V}\boldsymbol{p}_{i} \boldsymbol{e}_{i}
\right\|_2^2
\right),
\]
where $p$ is normalized over the candidate pool.
Therefore, computing $\epsilon$ only requires a probability-weighted average of token embeddings, with complexity $O(Nd)$.

After obtaining $\epsilon$, kernel entries are dynamically computed according to Eq.~\ref{eq:sim kernel}.
Whenever a new token is selected, the algorithm computes and caches its similarities to all candidate tokens, which costs $O(Nd)$ per selected token and therefore $O(N\tau d)$ in total.
For the greedy selection stage, at the $t$-th step, the selected set has size $t$, and the algorithm evaluates at most $N-t$ remaining candidates.
For each candidate $j$, the dominant operation is computing $\boldsymbol{\alpha}_j=\mathbf R\boldsymbol{\beta}_j$, where $\mathbf R\in\mathbb R^{t\times t}$ and $\boldsymbol{\beta}_j\in\mathbb R^t$, which costs $O(t^2)$.
Hence, the total greedy evaluation cost before stopping is
\[
    \sum_{t=1}^{\tau-1} O\bigl((N-t)t^2\bigr)
    \leq
    \sum_{t=1}^{\tau-1} O(Nt^2)
    =
    O(N\tau^3).
\]
After each selected token, updating $\mathbf R$ and $z$ through the block update costs $O(t^2)$ at step $t$, giving an additional $O(\tau^3)$ cost, which is dominated by $O(N\tau^3)$ since $\tau\leq N$.
Therefore, the overall time complexity of ME-Decoding is
\[
    O(Nd+N\tau d+N\tau^3)
    =
    O\bigl(N\tau(d+\tau^2)\bigr).
\]
The memory overhead is $O(N\tau)$ for cached kernel entries and $O(\tau^2)$ for maintaining $\mathbf R$.
In contrast, explicitly constructing the full kernel matrix would require $O(N^2d)$ time and $O(N^2)$ memory.
Since the early stopping rule usually yields $\tau\ll N$, ME-Decoding avoids the quadratic dependence on the candidate pool size and remains efficient in practice.

\FloatBarrier
\subsection{Hyperparameter Analysis}

We study the sensitivity of ME-Decoding to the hyperparameter $\lambda$, which controls the compactness of the selected token set. A larger $\lambda$ imposes a stronger penalty on the subset size and therefore encourages a tighter candidate set, while a smaller $\lambda$ allows more tokens to be retained. Table~\ref{tab:hyper_rg_details} reports the accuracy of ME-Decoding under different values of $\lambda$ across three models, three temperatures, and two reasoning datasets. The results show that ME-Decoding is relatively stable over a range of $\lambda$ values and consistently achieves strong performance under different settings. Among the tested values, $\lambda=0.9$ obtains the best average accuracy on both GSM8K and GPQA. Therefore, unless otherwise specified, we use $\lambda=0.9$ as the default setting in the main experiments.

\begin{table*}[t!] 
\centering
\footnotesize
\setlength{\tabcolsep}{4.5pt}
\renewcommand{\arraystretch}{1.05}
\begin{tabular}{lccc|ccc|ccc|c}
\toprule
\multirow{2}{*}{\textbf{$\lambda$}}
& \multicolumn{3}{c|}{Qwen3-4B-Inst.}
& \multicolumn{3}{c|}{Phi-4-mini-Inst.}
& \multicolumn{3}{c|}{Mistral-7B-Inst.}
& \multirow{2}{*}{Avg.} \\
\cmidrule(lr){2-4} \cmidrule(lr){5-7} \cmidrule(lr){8-10}
& $T=1.0$ & $T=1.5$ & $T=2.0$
& $T=1.0$ & $T=1.5$ & $T=2.0$
& $T=1.0$ & $T=1.5$ & $T=2.0$
& \\
\midrule
\multicolumn{11}{c}{GSM8K} \\
\midrule
1.0
& 81.20 & 81.35 & 79.76
& 83.78 & 84.38 & 82.03
& 52.84 & 52.77 & 53.45
& 72.39 \\

\rowcolor{gray!18}
0.9
& 80.67 & 79.76 & 80.06
& 84.08 & 84.23 & 82.41
& 55.34 & 53.45 & 53.90
& \textbf{72.66} \\

0.8
& 80.44 & 80.06 & 80.44
& 83.70 & 84.08 & 81.65
& 55.19 & 53.45 & 53.53
& 72.50 \\

0.7
& 79.68 & 80.14 & 79.98
& 84.15 & 83.55 & 82.11
& 53.45 & 54.51 & 53.75
& 72.37 \\
\midrule
\multicolumn{11}{c}{GPQA} \\
\midrule
1.0
& 34.82 & 34.15 & 34.82
& 35.04 & 35.27 & 35.71
& 27.90 & 28.79 & 27.23
& 32.64 \\

\rowcolor{gray!18}
0.9
& 35.49 & 35.49 & 35.71
& 34.38 & 36.16 & 33.93
& 28.35 & 28.79 & 28.35
& \textbf{32.96} \\

0.8
& 36.16 & 36.38 & 35.27
& 35.27 & 33.26 & 34.15
& 27.90 & 25.67 & 27.68
& 32.42 \\

0.7
& 36.38 & 35.04 & 34.60
& 33.71 & 31.92 & 33.93
& 27.90 & 29.24 & 26.34
& 32.12 \\
\bottomrule
\end{tabular}
\caption{Detailed hyperparameter selection results for ME-Decoding. Accuracy is reported in percentage. The highlighted row corresponds to the selected setting, \texttt{$\lambda$=0.9}, which achieves the best average performance on both GSM8K and GPQA.}
\label{tab:hyper_rg_details}
\end{table*}

\subsection{Diversity Analysis}
\label{app:support_diagnostics}

\paragraph{Support-Level Diversity Diagnostics.}

\begin{table*}[t]
\centering
\setlength{\tabcolsep}{3.5pt}
\resizebox{\textwidth}{!}{
\begin{tabular}{llccccccc}
\toprule
Dataset & Method & Acc. (\%) & Mass & Cos. & Avg. $|S|$ & $H_{\mathrm{before}}$ & $H_{\mathrm{after}}$ & $\MES$ \\
\midrule
\multirow{7}{*}{GSM8K}
& \textbf{ME-Decoding} & $\mathbf{63.20\pm0.57}$ & 0.846 & 0.151 & 1.076 & 0.614 & 0.052 & 0.756 \\
& Top-$W$ & $61.64\pm1.19$ & 0.841 & 0.208 & 1.093 & 0.692 & 0.057 & 0.688 \\
& $p$-less & $61.33\pm0.23$ & 0.842 & 0.179 & 1.164 & 0.659 & 0.075 & 0.689 \\
& Top-$H$ & $61.16\pm0.42$ & 0.847 & 0.241 & 2.240 & 0.736 & 0.191 & 0.660 \\
& Approx. size-matched & $61.79\pm1.26$ & 0.850 & 0.210 & 1.067 & 0.615 & 0.044 & 0.707 \\
& Approx. entropy-matched & $61.79\pm0.99$ & 0.851 & 0.208 & 1.070 & 0.614 & 0.045 & 0.708 \\
& Approx. cosine-matched & $62.70\pm0.35$ & 0.840 & 0.198 & 1.014 & 0.598 & 0.010 & 0.715 \\
\midrule
\multirow{7}{*}{GPQA}
& \textbf{ME-Decoding} & $\mathbf{32.66\pm1.05}$ & 0.759 & 0.243 & 1.134 & 0.983 & 0.091 & 0.589 \\
& Top-$W$ & $27.69\pm0.91$ & 0.692 & 0.185 & 1.213 & 1.454 & 0.124 & 0.498 \\
& $p$-less & $28.21\pm1.05$ & 0.770 & 0.192 & 1.265 & 1.002 & 0.144 & 0.553 \\
& Top-$H$ & $27.52\pm0.66$ & 0.778 & 0.184 & 2.877 & 1.277 & 0.431 & 0.475 \\
& Approx. size-matched & $28.47\pm1.17$ & 0.760 & 0.289 & 1.106 & 0.947 & 0.069 & 0.558 \\
& Approx. entropy-matched & $28.47\pm1.17$ & 0.760 & 0.289 & 1.106 & 0.947 & 0.069 & 0.558 \\
& Approx. cosine-matched & $29.25\pm0.91$ & 0.737 & 0.259 & 1.084 & 1.018 & 0.055 & 0.537 \\
\bottomrule
\end{tabular}}
\caption{Complete support diagnostics on Qwen2.5-1.5B at $T=1.0$. Cosine similarity is computed within the selected support at each decoding step and then averaged across steps. The approximately matched Min-$p$ controls are calibrated on independent prompts. Accuracy reports the mean $\pm$ sample standard deviation over three seeds.}
\label{tab:support_matched_controls}
\end{table*}

Table~\ref{tab:support_matched_controls} shows that ME-Decoding achieves the highest accuracy and $\MES$ on both datasets, while retaining a compact support and comparable probability mass.
None of the approximately matched probability-only controls reproduces both results, indicating that the gain is not explained solely by stronger truncation or one support statistic.


\paragraph{Detailed Diversity Results.}
Table~\ref{tab:diversity_temperature} reports the detailed numerical results corresponding to the diversity analysis in the main text. 
For each temperature, we report Acc, Distinct-1, Distinct-2, $1-\mathrm{Self\text{-}BLEU}$, and the aggregated Diversity Avg. score.
The diversity score is computed by min-max normalizing the three diversity metrics within each temperature and then taking their average.
Although Top-$p$ often obtains the highest diversity score, especially at high temperature, its accuracy drops substantially.
In contrast, ME-Decoding maintains the strongest accuracy across temperatures, showing that it provides a better accuracy--diversity trade-off for reasoning-oriented generation.

\begin{table*}[t]
\centering
\setlength{\tabcolsep}{4.5pt}
\renewcommand{\arraystretch}{1.05}
\caption{
Accuracy and diversity comparison under different temperatures.
Diversity Avg. is computed by first min-max normalizing Distinct-1, Distinct-2, and $1-\mathrm{Self\text{-}BLEU}$ within each temperature, and then averaging the three normalized scores.
}
\label{tab:diversity_temperature}
\resizebox{\textwidth}{!}{
\begin{tabular}{c l c c c c c}
\toprule
$T$
& Method
& Acc $\uparrow$ 
& Distinct-1 $\uparrow$
& Distinct-2 $\uparrow$
& $1-\mathrm{Self\text{-}BLEU}$ $\uparrow$ 
& Diversity Avg. $\uparrow$ \\
\midrule
\multirow{8}{*}{1.0}
& Top-$p$                         & 0.8484 & 0.0050 & 0.0695 & 0.2570 & 0.9842 \\
& Min-$p$                         & 0.8596 & 0.0050 & 0.0705 & 0.2595 & 1.0000 \\
& $p$-less                        & 0.8906 & 0.0046 & 0.0445 & 0.0607 & 0.1012 \\
& Top-$H$                         & 0.8832 & 0.0045 & 0.0418 & 0.0614 & 0.0043 \\
& Top-$W$                         & 0.8996 & 0.0049 & 0.0480 & 0.0686 & 0.3550 \\
\rowcolor{gray!18} 
& ME-Decoding ($\lambda=0.9$)     & 0.9030 & 0.0049 & 0.0472 & 0.0588 & 0.3294 \\
\rowcolor{gray!18} 
& ME-Decoding ($\lambda=0.8$)     & 0.9008 & 0.0049 & 0.0504 & 0.0813 & 0.4039 \\
\rowcolor{gray!18} 
& ME-Decoding ($\lambda=0.7$)     & 0.9024 & 0.0049 & 0.0528 & 0.1014 & 0.4652 \\
\midrule
\multirow{8}{*}{1.5}
& Top-$p$                         & 0.6794 & 0.0224 & 0.1503 & 0.4136 & 1.0000 \\
& Min-$p$                         & 0.7802 & 0.0054 & 0.0849 & 0.3421 & 0.3884 \\
& $p$-less                        & 0.8682 & 0.0046 & 0.0546 & 0.1464 & 0.0783 \\
& Top-$H$                         & 0.8224 & 0.0047 & 0.0585 & 0.1974 & 0.1434 \\
& Top-$W$                         & 0.8982 & 0.0049 & 0.0542 & 0.1148 & 0.0516 \\
\rowcolor{gray!18} 
& ME-Decoding ($\lambda=0.9$)     & 0.9026 & 0.0052 & 0.0526 & 0.0735 & 0.0112 \\
\rowcolor{gray!18} 
& ME-Decoding ($\lambda=0.8$)     & 0.9058 & 0.0051 & 0.0564 & 0.1046 & 0.0528 \\
\rowcolor{gray!18} 
& ME-Decoding ($\lambda=0.7$)     & 0.8996 & 0.0053 & 0.0592 & 0.1268 & 0.0879 \\
\midrule
\multirow{8}{*}{2.0}
& Top-$p$                         & 0.1886 & 0.2494 & 0.7750 & 0.9054 & 1.0000 \\
& Min-$p$                         & 0.6600 & 0.0068 & 0.1122 & 0.4487 & 0.1833 \\
& $p$-less                        & 0.8128 & 0.0049 & 0.0659 & 0.2454 & 0.0786 \\
& Top-$H$                         & 0.7394 & 0.0053 & 0.0784 & 0.3209 & 0.1149 \\
& Top-$W$                         & 0.8906 & 0.0047 & 0.0543 & 0.1397 & 0.0310 \\
\rowcolor{gray!18} 
& ME-Decoding ($\lambda=0.9$)     & 0.9068 & 0.0050 & 0.0499 & 0.0668 & 0.0004 \\
\rowcolor{gray!18} 
& ME-Decoding ($\lambda=0.8$)     & 0.9036 & 0.0051 & 0.0553 & 0.0959 & 0.0146 \\
\rowcolor{gray!18} 
& ME-Decoding ($\lambda=0.7$)     & 0.9034 & 0.0052 & 0.0573 & 0.1187 & 0.0247 \\
\bottomrule
\end{tabular}
}
\end{table*}

\subsection{Additional Results for Open-Ended Generation}
\label{app:instruction_chat_details}
\label{app:judge_robustness}

In the main text, we report the averaged instruction-following and chat performance over three instruction-tuned models.
Here, we provide the corresponding per-model visualizations and detailed numerical results.
Figure~\ref{fig:instruction_chat_detailed} shows the performance curves on MT-Bench and AlpacaEval across temperatures for each model.
Figure~\ref{fig:exp002_rank_heatmap} further summarizes the relative ranking of each decoding method under every model-temperature setting, averaged over MT-Bench and AlpacaEval.
Lower ranks indicate better performance.
The heatmap shows that ME-Decoding obtains competitive ranks in multiple settings and achieves the best overall average rank, suggesting that its improvement is stable across models and temperatures.
Tables~\ref{tab:exp002_mt_mean_temperature} and~\ref{tab:exp002_alpacapct_temperature} report the exact MT-Bench judge scores and AlpacaEval candidate win-rates, respectively.
ME-Decoding achieves the highest overall averages, with $7.17$ on MT-Bench and $14.80\%$ on AlpacaEval, although individual baselines lead in several settings.
All results are evaluated using the same judge-based evaluation protocol as in the main text and aggregated over 3 runs.

\paragraph{Evaluation Robustness.}
AlpacaEval-style evaluation compares each generated response with a fixed reference for the same prompt, while MT-Bench-style evaluation independently assigns a score from 1 to 10.
All methods use matched samples at $T\in\{1.0,1.5,2.0\}$ over three generation runs.
We evaluate the same outputs with DeepSeek-V4-Pro and GLM-5.2 \citep{zeng2026glm}.
Table~\ref{tab:two_judges} shows that ME-Decoding achieves the highest result under both judges and evaluation protocols.
Table~\ref{tab:paired_bootstrap} further reports positive paired $95\%$ confidence intervals against Top-$W$, $p$-less, and Top-$H$, supporting the consistency of the gains beyond aggregate means.
Additional length, style, and refusal diagnostics indicate that the gains are not explained by verbosity or conservative refusal behavior.

\begin{table*}[t]
\centering
\footnotesize
\setlength{\tabcolsep}{7pt}
\begin{tabular}{llcc}
\toprule
Judge & Method & AlpacaEval (\%) & MT-Bench \\
\midrule
\multirow{6}{*}{DeepSeek-V4-Pro}
& Top-$p$ & $13.04\pm2.07$ & $6.34\pm0.89$ \\
& Min-$p$ & $14.14\pm0.60$ & $6.86\pm0.41$ \\
& Top-$H$ & $14.36\pm0.58$ & $6.88\pm0.33$ \\
& $p$-less & $14.33\pm0.07$ & $6.94\pm0.11$ \\
& Top-$W$ & $14.14\pm0.04$ & $7.08\pm0.02$ \\
& \textbf{ME-Decoding} & $\mathbf{14.80\pm0.30}$ & $\mathbf{7.17\pm0.03}$ \\
\midrule
\multirow{4}{*}{GLM-5.2}
& Top-$H$ & $9.48\pm0.78$ & $5.42\pm0.23$ \\
& $p$-less & $10.18\pm0.24$ & $5.54\pm0.05$ \\
& Top-$W$ & $10.11\pm0.20$ & $5.58\pm0.03$ \\
& \textbf{ME-Decoding} & $\mathbf{10.63\pm0.16}$ & $\mathbf{5.64\pm0.01}$ \\
\bottomrule
\end{tabular}
\caption{Open-ended evaluation under two automatic judges. Results average temperature-level aggregates over three generation runs.}
\label{tab:two_judges}
\end{table*}

\begin{table*}[t]
\centering
\footnotesize
\setlength{\tabcolsep}{6pt}
\begin{tabular}{llcccc}
\toprule
Judge & Baseline & $\Delta$ AlpacaEval (pp) & 95\% CI & $\Delta$ MT-Bench & 95\% CI \\
\midrule
\multirow{3}{*}{DeepSeek-V4-Pro}
& Top-$W$ & +0.656 & $[0.320,0.980]$ & +0.092 & $[0.018,0.169]$ \\
& $p$-less & +0.469 & $[0.115,0.810]$ & +0.234 & $[0.158,0.305]$ \\
& Top-$H$ & +0.440 & $[0.076,0.799]$ & +0.297 & $[0.221,0.373]$ \\
\midrule
\multirow{3}{*}{GLM-5.2}
& Top-$W$ & +0.529 & $[0.239,0.819]$ & +0.069 & $[0.018,0.120]$ \\
& $p$-less & +0.456 & $[0.138,0.782]$ & +0.107 & $[0.052,0.163]$ \\
& Top-$H$ & +1.157 & $[0.824,1.481]$ & +0.221 & $[0.162,0.281]$ \\
\bottomrule
\end{tabular}
\caption{Paired-bootstrap gains of ME-Decoding. Resampling preserves prompt, model, temperature, and generation-run matching.}
\label{tab:paired_bootstrap}
\end{table*}

\begin{figure*}[t]
    \centering
    \includegraphics[width=\textwidth]{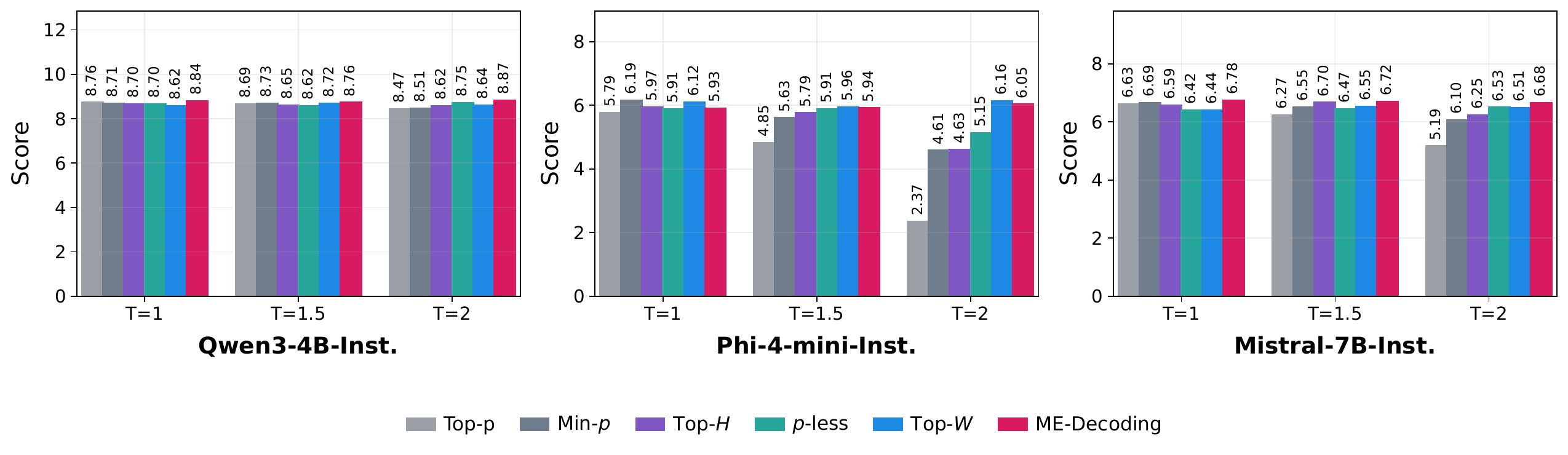}

    \includegraphics[width=\textwidth]{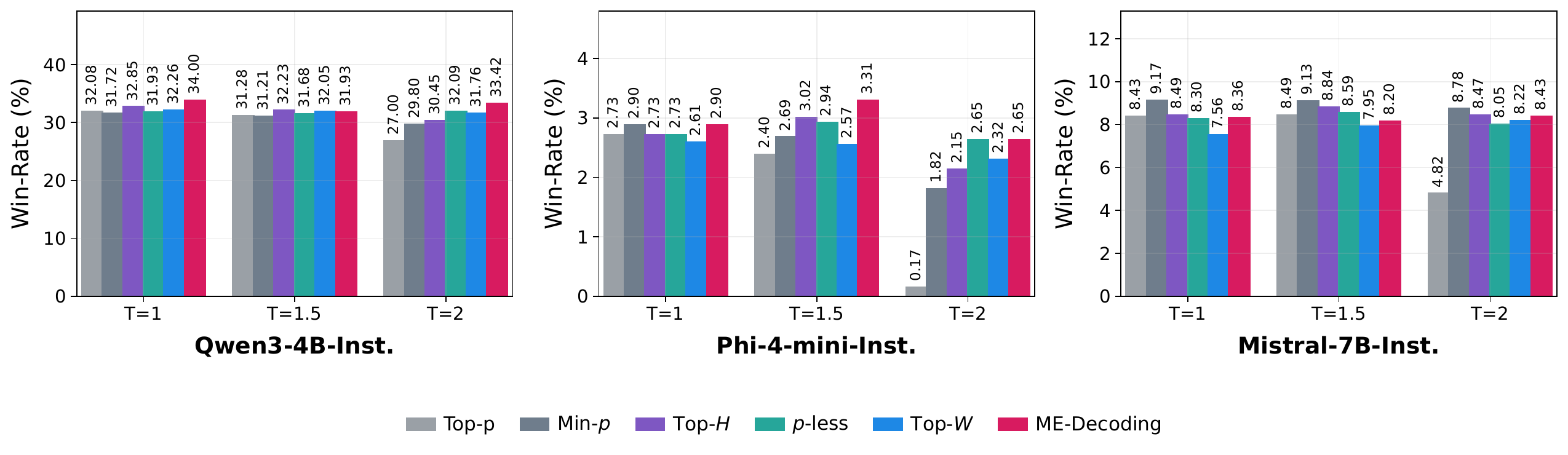}

    \caption{Detailed instruction-following and chat performance across temperatures. Top: MT-Bench judge scores. Bottom: AlpacaEval candidate win-rate. Results are reported for three instruction-tuned models and aggregated over 3 runs.}
\label{fig:instruction_chat_detailed}
\end{figure*}

\begin{table*}[t]
\centering
\footnotesize
\setlength{\tabcolsep}{5pt}
\begin{tabular}{lccc|ccc|ccc|c}
\toprule
& \multicolumn{3}{c|}{Qwen3-4B-Inst.}
& \multicolumn{3}{c|}{Phi-4-mini-Inst.}
& \multicolumn{3}{c|}{Mistral-7B-Inst.}
& \\
\cmidrule(lr){2-4} \cmidrule(lr){5-7} \cmidrule(lr){8-10}
Method
& $T=1.0$ & $T=1.5$ & $T=2.0$
& $T=1.0$ & $T=1.5$ & $T=2.0$
& $T=1.0$ & $T=1.5$ & $T=2.0$
& Avg. \\
\midrule
Min-$p$
& 8.71 & \underline{8.73} & 8.51
& \textbf{6.19} & 5.63 & 4.61
& \underline{6.69} & 6.55 & 6.10
& 6.86 \\

Top-$p$
& \underline{8.76} & 8.69 & 8.47
& 5.79 & 4.85 & 2.37
& 6.63 & 6.27 & 5.19
& 6.34 \\

$p$-less
& 8.70 & 8.62 & \underline{8.75}
& 5.91 & 5.91 & 5.15
& 6.42 & 6.47 & \underline{6.53}
& 6.94 \\

Top-$H$
& 8.70 & 8.65 & 8.62
& 5.97 & 5.79 & 4.63
& 6.59 & \underline{6.70} & 6.25
& 6.88 \\

Top-$W$
& 8.62 & 8.72 & 8.64
& \underline{6.12} & \textbf{5.96} & \textbf{6.16}
& 6.44 & 6.55 & 6.51
& \underline{7.08} \\

\rowcolor{gray!18} \textbf{Ours}
& \textbf{8.84} & \textbf{8.76} & \textbf{8.87}
& 5.93 & \underline{5.94} & \underline{6.05}
& \textbf{6.78} & \textbf{6.72} & \textbf{6.68}
& \textbf{7.17} \\

\bottomrule
\end{tabular}
\caption{Detailed MT-Bench judge scores. Results are reported across three instruction-tuned models and three temperatures, aggregated over 3 runs. The Avg. column gives the unweighted average over all model-temperature combinations. Best results are shown in \textbf{bold}, and second-best results are \underline{underlined}.}
\label{tab:exp002_mt_mean_temperature}
\end{table*}

\begin{table*}[t]
\centering
\footnotesize
\setlength{\tabcolsep}{5pt}
\begin{tabular}{lccc|ccc|ccc|c}
\toprule
& \multicolumn{3}{c|}{Qwen3-4B-Inst.}
& \multicolumn{3}{c|}{Phi-4-mini-Inst.}
& \multicolumn{3}{c|}{Mistral-7B-Inst.}
& \\
\cmidrule(lr){2-4} \cmidrule(lr){5-7} \cmidrule(lr){8-10}
Method
& $T=1.0$ & $T=1.5$ & $T=2.0$
& $T=1.0$ & $T=1.5$ & $T=2.0$
& $T=1.0$ & $T=1.5$ & $T=2.0$
& Avg. \\
\midrule
Min-$p$
& 31.72 & 31.21 & 29.80
& \textbf{2.90} & 2.69 & 1.82
& \textbf{9.17} & \textbf{9.13} & \textbf{8.78}
& 14.14 \\

Top-$p$
& 32.08 & 31.28 & 27.00
& 2.73 & 2.40 & 0.17
& 8.43 & 8.49 & 4.82
& 13.04 \\

$p$-less
& 31.93 & 31.68 & \underline{32.09}
& \underline{2.73} & 2.94 & \textbf{2.65}
& 8.30 & 8.59 & 8.05
& 14.33 \\

Top-$H$
& \underline{32.85} & \textbf{32.23} & 30.45
& \underline{2.73} & \underline{3.02} & 2.15
& \underline{8.49} & \underline{8.84} & \underline{8.47}
& \underline{14.36} \\

Top-$W$
& 32.26 & \underline{32.05} & 31.76
& 2.61 & 2.57 & \underline{2.32}
& 7.56 & 7.95 & 8.22
& 14.14 \\

\rowcolor{gray!18} \textbf{Ours}
& \textbf{34.00} & 31.93 & \textbf{33.42}
& \textbf{2.90} & \textbf{3.31} & \textbf{2.65}
& 8.36 & 8.20 & 8.43
& \textbf{14.80} \\

\bottomrule
\end{tabular}
\caption{Detailed AlpacaEval candidate win-rate (\%). Results are reported across three instruction-tuned models and three temperatures, aggregated over 3 runs. The Avg. column gives the unweighted average over all model-temperature combinations. Best results are shown in \textbf{bold}, and second-best results are \underline{underlined}.}
\label{tab:exp002_alpacapct_temperature}
\end{table*}

\begin{figure*}[htbp]
    \centering
    \includegraphics[width=\textwidth]{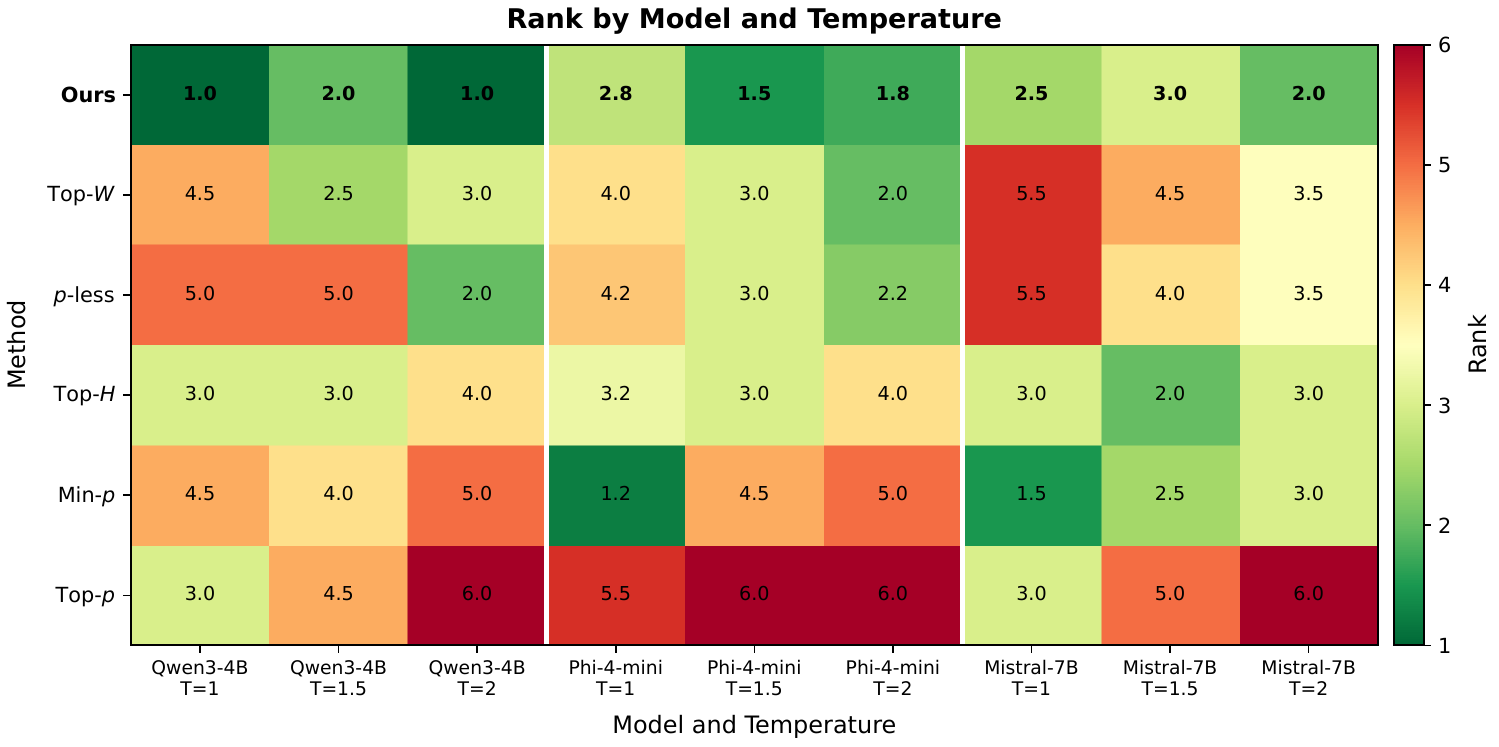}
    \caption{
    Detailed average-rank comparison on open-ended generation benchmarks.
    The heatmap reports the rank of each decoding method under each model-temperature setting, averaged over AlpacaEval and MT-Bench.
    }
    \label{fig:exp002_rank_heatmap}
\end{figure*}